\renewcommand{\@noticestring}{}
\newif\if@restonecol
\newcommand{\argmax}{\operatornamewithlimits{argmax}}
\DeclareRobustCommand\onedot{\futurelet\@let@token\@onedot}
\def\onedot{. } %
\def\eg{\emph{e.g}\onedot}
\newcolumntype{P}[1]{>{\RaggedRight\hspace{0pt}}p{#1}}
\newcolumntype{X}[1]{>{\RaggedRight\hspace*{0pt}}p{#1}}
\DeclareMathOperator{\E}{\mathbb{E}}
\theoremstyle{plain}
\newtheorem{theorem}{Theorem}[section]
\newtheorem{remark}[theorem]{Remark}
\theoremstyle{definition}
\newtheorem{setting}{Setting}
\newcommand{\danica}[2][noinline]{\todo[color=violet!20,#1]{Danica: #2}}
\title{Making Look-Ahead Active Learning Strategies Feasible with Neural Tangent Kernels}
\author{%
  Mohamad Amin Mohamadi\thanks{These authors contributed equally.}\\
  University of British Columbia\\
  \texttt{lemohama@cs.ubc.ca}
  \And
  \hspace{-0.5em}Wonho Bae\footnotemark[1]\\
  University of British Columbia\\
  \texttt{whbae@cs.ubc.ca}
  \And
  Danica J.\ Sutherland\\
  UBC \& Amii\\
  \texttt{dsuth@cs.ubc.ca}
}
\begin{document}

\maketitle

\begin{abstract}
We propose a new method for approximating active learning acquisition strategies that are based on retraining with hypothetically-labeled candidate data points. Although this is usually infeasible with deep networks, we use the neural tangent kernel to approximate the result of retraining, and prove that this approximation works asymptotically even in an active learning setup -- approximating ``look-ahead'' selection criteria with far less computation required. This also enables us to conduct sequential active learning, i.e.\ updating the model in a streaming regime, without needing to retrain the model with SGD after adding each new data point. Moreover, our querying strategy, which better understands how the model's predictions will change by adding new data points in comparison to the standard (``myopic'') criteria, 
beats other look-ahead strategies by large margins, and achieves equal or better performance compared to state-of-the-art methods on several benchmark datasets in pool-based active learning.
\end{abstract}

\section{Introduction}
\label{sec:intro}
Deep learning has drastically advanced over the past decade along with a dramatic increase in the volume of available data.
It is, however, time-consuming and expensive to manually annotate large datasets.
Active learning attempts to alleviate this
by allowing a model to ``actively'' request annotation of specific data points,
with the expectation that a model trained on informative points will learn a better prediction than one on a random, ``passively'' labeled set of the same size.

The most common form of active learning is based on using an \emph{acquisition function}
to measure the informativeness of potential data points or batches from some unlabeled pool.
Many successful acquisition functions are based on the current model's uncertainty,
such as maximum entropy~\citep{entropy2014wang}
and Bayesian Active Learning by Disagreement (BALD)~\citep{bald2011houlsby}.
These functions are based on the expectation that training on points about which the current model is uncertain will be effective at decreasing the future model's uncertainty about similar points.
This assumption, however, often does not hold:
uncertain points might be inherently difficult to predict (aleatoric uncertainty)
or simply too difficult for the model to learn right now.

It would be helpful, then, to see how a new data point would change a model, potentially with respect to other unseen data.
One such strategy is known as Expected Model Change,
which approximates how much a model's parameters will change when observing a new hypothetically-labeled data point \citep{egl2007settles,badge2019ash}.
This approach, though, only considers the magnitude of change in parameters,
not the model's actual outputs on the data distribution,
and it only looks at the size of the first stochastic gradient (SGD) step with the new data point rather than considering the full trajectory of training.

We will refer to acquisition strategies which consider full retraining of a model based on hypothetical observations as \textit{look-ahead} criteria.
Previous attempts to use look-ahead criteria based on the change in model output
include
Expected Error Reduction~\citep{eer2001roy} and Expected Model Output Change~\citep{emoc2014frey}.
These approaches have been used only with specialized models such as Na\"ive Bayes and Gaussian processes,
where retraining on every new candidate data point is feasible.
These classes of models, however, tend to not work as well as modern neural networks,
meaning these approaches are outperformed by simpler acquisition functions on stronger models.

In this work, we propose an algorithm that makes it feasible to conduct active learning with look-ahead acquisition strategies using general neural networks.
More specifically, we approximate a network using its Neural Tangent Kernel (NTK)~\citep{ntk2018jacot}, which makes it possible to obtain the behavior of retraining a network on a candidate data point without actually retraining the network.
At each query step, we use the local approximation of the neural network as a proxy to compute a look-ahead acquisition function, from which we query a new data point.
We prove that in the asymptotic regime where the width of the neural network goes to infinity,
this approximation agrees with the result of actually retraining the network,
even in an iterative setup such as in active learning.
Our approximation decreases the wall-clock time more than $100$ times compared to na\"ive computation of a look-ahead acquisition function,
making them far more feasible in practice.

Our proposed method outperforms existing look-ahead strategies by large margins, and achieves equal or better performance compared to the state-of-the-art methods on several benchmark datasets in pool-based active learning,
including MNIST~\citep{mnist2012deng}, SVHN~\citep{svhn2011netzer}, CIFAR10~\citep{cifar10krizhevsky}, and CIFAR100~\citep{cifar10krizhevsky}.
The NTK approximation also makes it possible to decouple receiving new data labels and SGD training,
unlike previous active learning methods, where knowing true labels does not add any information without SGD training.
This is useful when annotation is fast (yet expensive) but model training is slow,
since adding data to an existing NTK approximation is far faster than retraining with SGD.
Sequentially adding true labels of new data gives performance substantially better than more common batch setups.

\section{Preliminaries}

\label{sec:preliminary}
\paragraph{Active learning.}
In pool-based active learning,
we have a labeled set $\mathcal{L} = \{(x^{(i)}, y^{(i)})\}_{i=1}^{\lvert \mathcal{L} \rvert}$ and unlabeled set $\mathcal{U} = \{x^{(i)}\}_{i=1}^{\lvert \mathcal{U} \rvert}$,
where $x^{(i)}$s are inputs and $y^{(i)}$s are corresponding labels.
At each time step $t$, a model $f$ parameterized by $\theta$ is trained on the labeled set $\mathcal{L}$ which we denoted as $f_\mathcal{L}$,
and then a data point\danica{allow batching here?}{} from the unlabeled set $\mathcal U$ is selected to be labeled,
according to an acquisition function $A$
measuring the ``information gain'' of a potential data point $x$:
\begin{align}
    x^* = \argmax_{x \in \mathcal{U}} A(x, f_\mathcal{L}, \mathcal{L},\, \mathcal{U})
    \label{eq:acquisition}
.\end{align}

One of the most common choices of the acquisition function is \textit{maximum entropy}, which computes the estimated entropy of the unknown label of $x$, denoted as $Y$:
$A_\textit{entropy}(x, f_\mathcal{L}, \mathcal{L}, \, \mathcal{U}) \vcentcolon= \mathcal{H}(Y \mid x;\, f_\mathcal{L})$.
Although the maximum entropy acquisition function often works well in practice, it does not consider how the model would change with a new queried data point,
and so can overly prioritize inherently difficult, uninformative points (\eg outliers).

Expected gradient length \citep{al2009settles}
approximates the \textit{expected model change} by how large the gradient of the loss becomes when adding that data point,
$%
\E_{y \sim p_\theta(\cdot|x)} \lVert \nabla \ell_\theta(\mathcal{L} \cup (x, y)) \rVert$.
BADGE~\citep{badge2019ash}, a more recent variant, lower-bounds the gradient norm of the last layer induced by any possible label.
These approaches
do not consider how the change in a model actually interacts with the data distribution:
large parameter changes might give relatively small changes in predictions on most data points.
They also consider a single SGD update, only roughly approximating total change over training.

One approach to alleviate this limitation is \textit{expected error reduction} \citep[EER;][]{eer2001roy}, given by
\begin{equation}
    \!\!\!
    A_\textit{EER}(x, f_\mathcal{L}, \mathcal{L}, \, \mathcal{U}) \vcentcolon= -\E_{y \sim p_\theta(\cdot|x)} \sum_{i=1}^{\lvert \mathcal{U} \rvert} \mathcal{H}(Y^{(i)} \mid x^{(i)};\, f_{\mathcal{L}^+})
    \label{eq:eer}
.\end{equation}
Here $f_{\mathcal{L}^+}$ refers to the model trained on $\mathcal L \cup \{(x, y)\}$,
and
$\mathcal{H}(Y^{(i)} | x^{(i)};\, f_{\mathcal{L}^+})$ is the (estimated) entropy of the unknown label of $x^{(i)}$ in $\mathcal{U}$ using that model.
Maximizing $A_\textit{EER}$ chooses the candidate point whose label has maximal mutual information to the labels of unlabeled data, $\mathcal{I}(Y; y |\, f_\mathcal{L})$.
If $f_{\mathcal L}$ is a fairly good model and the problem is fairly difficult,
the numerical value of $A_\textit{EER}$ is likely dominated by the sum of irreducible (aleatoric) uncertanties over the data points;
it thus might be prone to noise in estimation based on those large terms.
\citet{emoc2014frey} propose instead \textit{expected model output change} (EMOC),
based on the difference in model predictions
as measured by a distance $\mathcal D$
(\eg the Euclidean distance between probability vectors).
The acquisition function is defined as
\begin{align}
    A_\textit{EMOC}(x, f_\mathcal{L}, \mathcal{L}, \, \mathcal{U})
    \vcentcolon=
    \E_{y \sim p_\theta(\cdot|x)} \sum_{i=1}^{\lvert \mathcal{U} \rvert} \mathcal{D}( f_\mathcal{L}(x^{(i)}), f_{\mathcal{L}^+}(x^{(i)}))
    \label{eq:emoc}
.\end{align}
Intuitively, we might hope that the change in outputs roughly cancels out the irreducible (aleatoric) uncertainty,
leaving us to focus primarily on the decrease in model (epistemic) uncertainty \citep{epistemic2019nguyen}.
Our techniques also apply to EER, but
EMOC-like criteria performed better in our initial exploration.

Despite their advantages, look-ahead acquisition functions have been in practical reach only with certain types of models:
EER has been applied to Na\"ive Bayes~\citep{eer2001roy} and Gaussian random fields~\citep{eer_gf2003zhu}, and EMOC only to Gaussian processess~\citep{emoc2016kading,emoc_reg2018kading}.
Neural networks generally outperform those models,
but obtaining $f_{\mathcal{L}^+}$ with SGD training for every candidate example in $\mathcal{U}$ is impractically expensive.
In this work, we propose
to approximate $f_{\mathcal{L}^+}$ with a more computationally efficient proxy based on neural tangent kernels,
making look-ahead acquisition functions feasible on neural networks.

\paragraph{Neural tangent kernels.}

Over the past few years, connections between infinitely wide neural networks trained by gradient descent and kernel methods have become increasingly clear.
The NNGP approximation of \citet{gaussian2018matthews}, building off a line of work stemming from \citet{inf_priors1996neal},
shows that if a network's parameters are initialized with an appropriate Gaussian distribution
and only the last layer is trained,
the resulting function agrees with a Gaussian process with the kernel
$\mathcal{K}(x, x') = \E_\theta[ f_\theta(x) f_\theta(x') ]$,
where the expectation is over initializations.
\citet{ntk2018jacot}, building off several immediately preceding works on optimization of wide neural networks,
show that infinitely wide fully-connected neural networks also follow Gaussian process behavior,
with the kernel
\begin{equation}\label{eq:ntk}
    \mathcal{K}(x, x') = \E_\theta\left[ \left\langle \frac{\partial f_\theta(x)}{\partial \theta}, \frac{\partial f_\theta(x')}{\partial \theta} \right\rangle \right]
\end{equation}
(derivatives here denoting Jacobians with respect to the vector of all parameters $\theta$).
The also show that this kernel remains constant over the course of training,
so that these networks evolve like a linear model under kernel gradient descent. 
Among many important follow-ups,
\citet{cntk2019arora} expand the results to convolutional networks with finite but large widths,
and \citet{ntk_python2019novak} provide an implementation for nearly-arbitrary network structures. \citet{yang2020tensor,yang2021tensor} later showed that \citet{ntk2018jacot}'s findings are architecturally universal, extending the domain from fully-connected networks to a large class of architectures including ResNets and Transformers.

Unfortunately, these infinite-width versions of networks tend not to generalize as well as finite networks trained by SGD.
Hence, we still want to conduct active learning for finite neural networks, rather than for pure NTK models.
The infinite-width NTK also tends to be a mediocre proxy for the behavior of training a finite neural network,
and so using it as a proxy for $f_{\mathcal{L}^+}$ does not tend to work as well as we might hope (\eg \cref{fig:look_aheads}).
We will thus instead use a local linearized approximation of the neural network,
based on the empirical neural tangent kernel,
as introduced next.

\section{Active Learning using NTKs}
\label{sec:our_approach}

In this section, we first define a neural network, and the linear model which (in the infinite-width limit) agrees with training that network. 
We will prove that in the infinite-width limit,
sequentially retraining on increasing datasets -- as in the active learning process -- is the same as if we had trained from scratch on the final dataset.
In practical regimes, however,
the infinite-width NTK does not tend to agree with finite-width SGD very closely.
We thus use a local linear approximation of the network, based on the empirical NTK,
to approximate retraining in our look-ahead criteria.

\paragraph{Notation.}
We mostly use the same notation as \citet{ntk2018jacot}. We define $f$ as a fully-connected neural network with $L+1$ hidden layers numbered from $0$ (input) to $L$ (output), where each layer has $n_0, \dots, n_L$ neurons.
This network has number of parameters $P = \sum_{l=0}^{L-1}{(n_l + 1) n_{l+1}}$: each layer has a weight matrix $W^{(l)} \in \mathbb{R}^{n_l \times n_{l+1}}$ and a bias vector $b^{(l)} \in \mathbb{R}^{n_{l+1}}$.
The network is defined as $f_\theta$, where $\theta$ collects all of the parameters: $\theta = \cup_{l=0}^{L}{\theta^l}$ and $\theta^l = \operatorname{vec}(W^{(l)}, b^{(l)})$. 
We denote a labeled set $\mathcal{L}$ as defined in \cref{sec:preliminary} 
and use $\mathcal{X} = \{x: (x,y) \in \mathcal{L}\}$ and $\mathcal{Y} = \{y: (x,y)\in \mathcal{L}\}$ to denote its inputs and labels, respectively.
We write $f_{\mathcal{D}_1} \xrightarrow[t=\infty]{\mathcal{D}_1 \cup \mathcal{D}_2} f_{\mathcal{D}_1 \cup \mathcal{D}_2}$ to mean that $f_{\mathcal{D}_1 \cup \mathcal{D}_2}$ is trained using gradient descent until convergence on the dataset $\mathcal{D}_2$, starting from $f_{\mathcal{D}_1}$.

Unless otherwise specified, the loss function associated with gradient descent training is assumed to be the squared loss, $\ell(\mathcal{Y},f(\mathcal{X})) \vcentcolon= \frac{1}{2}{\lVert \mathcal{Y}-f(\mathcal{X})) \rVert}_2^2$. 
This allows for far more efficient usage of NTKs, but
\citet{l2_vs_ce2020hui} demonstrate that, with the right learning rate, $L_2$ loss is just as effective as cross-entropy for many vision and natural language processing tasks.

\paragraph{Neural networks in the infinite width limit.}

\citet{ntk2018jacot} show that in the infinite width limit, when the network $f$ is trained using gradient descent with the squared loss on a labeled set $\mathcal{L}$,
the outputs of the network on any arbitrary point $x$ evolve as
\begin{align} \label{eq:f_inf_jac}
    f\!_{t}(x) = f_0(x) + \mathcal{K}(x, \mathcal{X})
    \,
    \mathcal{K}(\mathcal{X}, \mathcal{X})^{-1}
    (\mathcal{I} - e^{-t\mathcal{K}(\mathcal{X}, \mathcal{X})}) (\mathcal{Y}-f_{0}(\mathcal{X}))
,\end{align}
where $\mathcal{K}$ is the NTK of \eqref{eq:ntk}, which stays constant during training.
Here we treat inputs $\mathcal X$ as a matrix and labels $\mathcal Y$ as a vector in corresponding order.
Consider 
sequentially training a network on increasing datasets $S_1 \subset S_2 \subset \cdots \subset S_C$,
warm-starting each time from the result of training on the previous dataset.
We build on the results of \citet{ntk2018jacot}
to show that the final network from this process is asymptotically equivalent, in the infinite-width limit,
to training the same network from scratch (cold-start) on the last, biggest dataset $S_C$.
This justifies our efficient approximation of the retrained network,
even after we have gone through many iterations of active learning.
See \cref{sec:app:th1_proof} for a formal statement and proof.

\begin{theorem}[Informal] \label{th:ws_cs}
Let $S_1, S_2, \dots, S_C$ be $C$ datasets such that for all $i > j$, $S_j \subset S_i$.
Let $f_0$ be a randomly initialized network.
Let $f_{S_{1, 2, \dots, C}}$ be the network resulting from starting at $f_0$ and training $f$ using gradient descent on datasets $S_1, \dots, S_C$ sequentially until convergence:
\begin{equation*}
    f_0\xrightarrow[t=\infty]{S_1} f_{S_1} \xrightarrow[t=\infty]{S_2} f_{S_{1,2}} \dots \xrightarrow[t=\infty]{S_C} f_{S_{1, 2, \dots, C}}
.\end{equation*}
Also, let $f_{S_C}$ be $f_0 \xrightarrow[t=\infty]{S_C} f_{S_C}$.
Assuming that $f$ has $L+1$ layers such that taking $n_0, n_1, \dots, n_L \to \infty$ sequentially, we have for any arbitrary data point $x$ that
    $f_{S_C}(x) = f_{S_{1, 2, \dots, C}}(x)$.
\end{theorem}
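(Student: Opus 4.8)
The plan is to use the closed-form converged solution implied by \eqref{eq:f_inf_jac} together with the fact that, in the infinite-width limit, the NTK $\mathcal{K}$ is a fixed deterministic kernel that does not change during \emph{any} phase of training. Taking $t \to \infty$ in \eqref{eq:f_inf_jac} (assuming $\mathcal{K}$ restricted to the training inputs is invertible), the term $e^{-t\mathcal{K}(\mathcal{X},\mathcal{X})}$ vanishes, so training from a starting function $g$ on a dataset with inputs $\mathcal{X}$ and labels $\mathcal{Y}$ converges to
\begin{equation*}
    g(x) + \mathcal{K}(x,\mathcal{X})\,\mathcal{K}(\mathcal{X},\mathcal{X})^{-1}\bigl(\mathcal{Y} - g(\mathcal{X})\bigr)
.\end{equation*}
I would single out two properties of this converged function: it \emph{interpolates} the training set, and it differs from the starting function $g$ only by an element of the span of $\{\mathcal{K}(\cdot, x) : x \in \mathcal{X}\}$.

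First I would write the warm-start process as a recursion. Let $g_0 = f_0$ and, for $k = 1,\dots,C$, let $g_k$ be the result of training $g_{k-1}$ to convergence on $S_k$, with inputs $\mathcal{X}_k$ and labels $\mathcal{Y}_k$; then $g_C = f_{S_{1,2,\dots,C}}$. Since the NTK is the same constant kernel $\mathcal{K}$ at every phase, each step reads
\begin{equation*}
    g_k(x) = g_{k-1}(x) + \mathcal{K}(x,\mathcal{X}_k)\,\mathcal{K}(\mathcal{X}_k,\mathcal{X}_k)^{-1}\bigl(\mathcal{Y}_k - g_{k-1}(\mathcal{X}_k)\bigr)
.\end{equation*}
From this I extract two invariants. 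The interpolation property applied at the last step gives $g_C(\mathcal{X}_C) = \mathcal{Y}_C$. Unrolling the recursion and using $\mathcal{X}_k \subseteq \mathcal{X}_C$ for every $k$ (because $S_k \subseteq S_C$) shows that $g_C - f_0$ is a sum of terms each lying in the span of $\{\mathcal{K}(\cdot, x) : x \in \mathcal{X}_C\}$, so $g_C(x) = f_0(x) + \mathcal{K}(x, \mathcal{X}_C)\,\alpha$ for some coefficient vector $\alpha$.

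I would then close the argument by uniqueness. The cold-start solution $f_{S_C}$ is, by \eqref{eq:f_inf_jac} at $t = \infty$, exactly of the form $f_0(x) + \mathcal{K}(x,\mathcal{X}_C)\,\alpha^\star$ with $\alpha^\star = \mathcal{K}(\mathcal{X}_C,\mathcal{X}_C)^{-1}(\mathcal{Y}_C - f_0(\mathcal{X}_C))$, and it too interpolates $S_C$. Imposing the interpolation constraint $f_0(\mathcal{X}_C) + \mathcal{K}(\mathcal{X}_C,\mathcal{X}_C)\,\alpha = \mathcal{Y}_C$ on any function of this form determines $\alpha$ uniquely once $\mathcal{K}(\mathcal{X}_C,\mathcal{X}_C)$ is invertible, so $\alpha = \alpha^\star$ and $g_C(x) = f_{S_C}(x)$ for every $x$.

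The main obstacle is not this algebra but justifying that the same deterministic kernel $\mathcal{K}$ governs \emph{every} warm-start phase. Equation \eqref{eq:f_inf_jac} is a statement about training from initialization; to reuse it starting from the already-trained $g_{k-1}$ I must argue that the network is still in the lazy/kernel regime there, i.e.\ that the empirical NTK at the warm-started parameters still converges to the same infinite-width limit and stays constant throughout the next phase. This is precisely where the hypothesis of taking $n_0, n_1, \dots, n_L \to \infty$ \emph{sequentially} is needed: one has to control the cumulative parameter drift across all $C$ phases and show it does not move the kernel, which I expect to require tracking the $O(1/\sqrt{n})$ scale of parameter motion and invoking the infinite-width concentration results of \citet{ntk2018jacot} layer by layer. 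Making this limit interchange rigorous, rather than the interpolation-plus-span bookkeeping, is the crux.
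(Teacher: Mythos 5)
Your proposal is correct, and its deterministic core takes a genuinely different route from the paper's. The paper (\cref{th:app-warmstart}) proves the one-step statement $f_{S_i \to S_{i+1}} = f_{S_{i+1}}$ by writing out the warm-start limit explicitly and showing by block-matrix algebra that the residual coefficient matrix $A$ in \eqref{eq:app:ws_eq_cs} vanishes identically --- using that $\mathcal{K}(\mathcal{X}_{S_{i+1}}, \mathcal{X}_{S_i})$ is exactly the column-submatrix of $\mathcal{K}_{S_{i+1}}$ selected by $S_i \subseteq S_{i+1}$ --- and then inducts over phases; this yields the stronger intermediate fact that \emph{after every phase} the warm-started function coincides everywhere with the cold start on the current cumulative set. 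You skip all per-phase matching: two invariants of the recursion (the final function interpolates $S_C$, and every increment lies in $\operatorname{span}\{\mathcal{K}(\cdot, x) : x \in \mathcal{X}_C\}$ because each $\mathcal{X}_k \subseteq \mathcal{X}_C$) plus invertibility of $\mathcal{K}(\mathcal{X}_C, \mathcal{X}_C)$ pin down the coefficient vector uniquely, forcing $g_C = f_{S_C}$. Your uniqueness argument is shorter and more conceptual (it is the standard kernel-interpolation uniqueness argument, and it makes plain that the nesting hypothesis is used only for the span containment); note also that positive definiteness at $\mathcal{X}_C$ automatically gives invertibility at every subset $\mathcal{X}_k$, so your per-phase recursion is well-defined under the single assumption you already make. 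What the paper's heavier computation buys is the per-cycle equality $f_{S_1 \to \cdots \to S_i} = f_{S_i}$ for each $i$, which is what \cref{re:augment_retrain} and the look-ahead approximation actually invoke at every active-learning cycle, not merely at the final one --- though your argument recovers this too by simply truncating the sequence at $S_i$. On the one genuinely analytic ingredient, you correctly identify the crux --- that a single constant kernel governs every warm-start phase --- but defer it; the paper discharges it in \cref{re:bounded_dt} not by tracking parameter drift layer by layer as you anticipate, but by verifying the hypothesis of Theorem 2 of \citet{ntk2018jacot} directly: within each phase the residual norm $\lVert d\rvert^{S_i}_{f_t}\rVert_{p^{in}}$ decays along the flow, so the time-integral of the training direction over the whole concatenated schedule is stochastically bounded, which is exactly what that theorem requires for asymptotic kernel constancy. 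Filling your deferred step this way completes your proof.
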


\paragraph{NTK approximations of retrained networks.}

Armed with \cref{th:ws_cs}, we can now approximate the outputs of a neural network from retraining. 
Suppose we have a labeled set $\mathcal{L}$ and unlabeled set $\mathcal{U}$ as defined in \cref{sec:preliminary}.
A neural network $f_{\mathcal{L}}$ (whose layers are assumed to be infinitely wide) has been trained on $\mathcal{L}$ as $f_0 \xrightarrow[t=\infty]{\mathcal{L}} f_{\mathcal{L}}$. 
We are interested in characterizing the outcome after retraining this neural network using each of the data points in $\mathcal{U}$. 
In other words, for each $x' \in \mathcal{U}$ with a hypothetical label $y'$ and $\mathcal{L}^+ \vcentcolon= \mathcal{L} \cup (x', y')$, we would like to know what $f_{\mathcal{L}} \xrightarrow[t=\infty]{\mathcal{L}^+} f_{\mathcal{L}^+}$ looks like. 
According to \eqref{eq:f_inf_jac}, the outputs of $f_{\mathcal{L}}$ for an arbitrary data point $x$ can be formulated as
\begin{align} \label{eq:f_inf}
f_{\mathcal{L}}(x) = \, f_0(x) + \mathcal{K}(x, \mathcal{X}) \, \mathcal{K}(\mathcal{X}, \mathcal{X})^{-1} (\mathcal{Y} - f_0(\mathcal{X})).
\end{align}
Let $\mathcal{X}^+$ and $\mathcal{Y}^+$ be inputs and labels of $\mathcal{L}^+$. 
Then, based on \cref{th:ws_cs}, we can conclude that
\begin{align}
\begin{split}
f_{\mathcal{L}^+}(x) = \, f_0(x) + \mathcal{K}(x, \mathcal{X}^+) {\mathcal{K}(\mathcal{X}^+, \mathcal{X}^+)}^{-1} (\mathcal{Y}^+ - f_0(\mathcal{X}^+))
\end{split}
.\end{align}
As $\mathcal{K}$ remains constant in the infinite-width regime, 
most of the relevant quantities can be reused:
\begin{align} \label{eq:augment}
\mathcal{K}(x, \mathcal{X}^+) = 
    \begin{bmatrix}
    \mathcal{K}(x, \mathcal{X}) & \mathcal{K}(x, x')
    \end{bmatrix},
\qquad
\mathcal{K}(\mathcal{X}^+, \mathcal{X}^+) = 
\begin{bmatrix}
    \mathcal{K}(\mathcal{X}, \mathcal{X}) & \mathcal{K}(\mathcal{X}, x') \\
    \mathcal{K}(x', \mathcal{X}) & \mathcal{K}(x', x')
\end{bmatrix}
.
\end{align}

\begin{remark}\label{re:augment_retrain}
In the infinite width limit, we can analytically obtain the predictions of a neural network after retraining on additional data points by augmenting its corresponding NTK using \eqref{eq:augment}.
\end{remark}

Several works have shown that while being an inspiring theoretical motivation, neural networks in the infinite width limit do not work as well as their finite-width counterparts~\citep{cntk2019arora,enhanced_cntk2019li,malach2021quantifying}.
Although \citet{cntk2019arora} prove non-asymptotic bounds between these infinite width neural networks and their corresponding finite-width networks, 
empirical studies in \citep{yang2020tensor,linntk2019lee} have shown that this approximation may not be effective for practical network widths.
(If it were, we wouldn't need a network at all, and would simply do all our learning with a Gaussian process based on the NTK.)
Thus, we would like to be able to efficiently characterize the outputs of a finite neural network after retraining. 

\citet{linntk2019lee} showed the first-order Taylor expansion of a neural network around its initialization (a \textit{linearized neural network})
has training dynamics converging to that of the neural network as the width grows.
That is, let $f_t$ denote the network which has been trained with gradient descent for $t$ steps,
and $f_t^\textit{lin}$ the result of training the linearized network for $t$ steps.
They prove that, under some regularity conditions and when the learning rate $\eta$ is less than a certain threshold,
\begin{equation} \label{eq:fin_lin_correspondence}
    \sup_{t\geq 0}{\lVert f_{t}(x) - f^\textit{lin}_{t}(x)\rVert}_2
    = \sup_{t\geq 0}{\lVert \Theta_t - \Theta_0 \rVert_F}
    = \mathcal{O}\left(\frac{1}{\sqrt{\mathrm{width}}}\right)
.\end{equation}
Informally, when $f$ is wide enough, its predictions can be approximated by those of the linearized network $f^\textit{lin}$.
This is attractive since $f^\textit{lin}$ has simple training dynamics,
converging to
\begin{align} \label{eq:f_lin}
    f^\textit{lin}_{\mathcal{L}}(x) = \, f_0(x) + \Theta_0(x, \mathcal{X})
    \, {\Theta_0(\mathcal{X}, \mathcal{X})}^{-1} 
    (\mathcal{Y} - f_0(\mathcal{X}))
,\end{align}
where $\Theta_0$ is the \textit{empirical} tangent kernel of $f_0$, $\Theta_0(x, y) \vcentcolon= \nabla_{\theta} f_{0}(x) \, \nabla_{\theta} f_{0}(y)^\top$.

In the infinite-width limit, the empirical NTK $\Theta_t$ converges to the NTK $\mathcal K$ almost surely,
leading \eqref{eq:f_lin} to become equivalent to \eqref{eq:f_inf}.
In finite-width regimes, however, it is intuitive to expect that the \emph{local} approximation may still be able to handle \emph{local} retraining,
such as $f_{\mathcal{L}} \xrightarrow[t=\infty]{\mathcal{L} \cup \{(x, y)\}} f_{\mathcal{L} \cup \{(x, y)\}}$,
even when the correspondence to the infinite-width limit $\mathcal K$ is loose.
We thus model retraining of a finite network
by augmenting the empirical NTK
as in \eqref{eq:augment}, as justified by \cref{th:ws_cs,re:augment_retrain}.
Specifically, we approximate the finite width network $f_{\mathcal{L}^+}(x)$ defined by $f_{\mathcal{L}} \xrightarrow[t=\infty]{\mathcal{L}^+} f_{\mathcal{L}^+}$  using
\begin{align}
\vspace{\baselineskip}
\label{eq:lin_approx}
f_{\mathcal{L}^+}(x)
\approx f^\textit{lin}_{\mathcal{L}^+}(x)
=
 f_\mathcal{L}(x) + \Theta_{\mathcal{L}}(x, \mathcal{X}^+){\Theta_{\mathcal{L}}(\mathcal{X}^+, \mathcal{X}^+)}^{-1} \left (\mathcal{Y}^+ - f_\mathcal{L}(\mathcal{X}^+) \right )
\end{align}
where $\Theta_{\mathcal{L}}$ is the empirical NTK obtained from the parameters of $f_{\mathcal{L}}$.

\paragraph{Efficient computation.}

To avoid computing the full empirical NTK with the shape of $LC \times LC$ where $L$ is the size of the labeled set and $C$ is the number of classes, we use the ``single- logit'' approximation as explained in \citet{wei2022more} (Section 2.3) in which we only compute the corresponding NTK of the first logit of the network, i.e. $\Theta(x, y) = {\nabla_\theta f^{(1)}(x)}^\top {\nabla_\theta f^{(1)}(y)} \otimes I_C$ where $f^{(1)}(x)$ refers to the first neuron of the output of $f$ on the datapoint $x$. 
For a kernel regression, we can take the kronecker product out, which decreases the memory and time complexity of computing the NTK by an order of $\mathcal O(C^2)$.

To compute \eqref{eq:lin_approx}, we must solve linear systems with $\Theta_{\mathcal{L}}(\mathcal{X}^+, \mathcal{X}^+)$.
This can also be done efficiently using the block structure of $\Theta_{\mathcal{L}}(\mathcal{X}^+, \mathcal{X}^+)$:
letting $\mathbf{v} = \Theta_{\mathcal{L}}(\mathcal{X}, \mathcal{X})\Theta_{\mathcal{L}}(\mathcal{X}, x')$ 
and $u = \Theta_{\mathcal{L}}(x', x') - \Theta_{\mathcal{L}}(x', \mathcal{X}) {\Theta_{\mathcal{L}}(\mathcal{X}, \mathcal{X})}^{-1} \Theta_{\mathcal{L}}(\mathcal{X}, x')$,
\begin{equation}
\label{eq:lin_block}
f^\textit{lin}_{\mathcal{L}^+}(x)
=
f^\textit{lin}_{\mathcal{L}}(x)
+
\frac{1}{u} \left ( \Theta_{\mathcal{L}}(x, \mathcal{X}) \mathbf{v} - \Theta_{\mathcal{L}}(x, x') \right ) \left ( \mathbf{v}^T (\mathcal{Y} - f_\mathcal{L}(\mathcal{X})) -   (y' - f_\mathcal{L}(x')) \right )
.\end{equation}

Thus, rather than inverting
$\Theta_{\mathcal{L}}(\mathcal{X}^+, \mathcal{X}^+)$ for each $x \in \mathcal{U}$, we can invert $\Theta_{\mathcal{L}}(\mathcal{X}, \mathcal{X})$ only once, then use some matrix multiplications to find $f^\textit{lin}_{\mathcal{L}^+}(x)$ for each query point.

\paragraph{Time complexity.}
\label{sec:time_complexity} 
Let $L$ be the size of the labeled set,
$U$ of the unlabeled set,
$P$ the number of model parameters,
and $E$ the number of training epochs. We assume that $P > L$ as we are using overparameterized models. The dominant term for na\"ive SGD retraining is $\mathcal O(LUPE)$ whereas the proposed method using block structure in \eqref{eq:lin_block} is dominated by computing $\Theta_{\mathcal{L}}(\mathcal{X}_\mathcal{U},\, \mathcal{X}_\mathcal{L})$ in $\mathcal O(LUP)$ time if $U > L$, otherwise $\mathcal O(L^2P)$ time for $\Theta_{\mathcal{L}}(\mathcal{X}_\mathcal{L},\, \mathcal{X}_\mathcal{L})$.
If $U > L$, then the proposed method is faster by an order of $\mathcal O(E)$.
In the other case, if $L > EU$, one can use the conjugate gradient kernel regression solvers as in \citet{rudi2017falkon} and \citet{gardner2018gpytorch} that further reduce the time complexity of our proposed method to $\mathcal O(L\sqrt{L}P)$, which is faster than the na\"ive SGD training by an order of $\mathcal O(UE / \sqrt{L})$.
We provide a more detailed analysis in \cref{app:sec:time_complex}.

\begin{algorithm}[tb]
  \caption{Active learning using NTKs}
  \label{alg:al}
\begin{algorithmic}
  \STATE {\bfseries Input:} Initialize a model $f_{0}$, labeled pool $\mathcal{L}_0$, and unlabeled pool $\mathcal{U}_0$
  \STATE Train $f_0$ on the initial labeled set $\mathcal{L}_0$, using SGD, to obtain $f_{\mathcal{L}_{0}}$
  \FOR{$t=0$ {\bfseries to} $T-1$}
    \STATE Compute $\Theta_{\mathcal{L}_t}(\mathcal{X}_{\mathcal{U}_t}, \mathcal{X}_{\mathcal{L}_t})$, $\Theta_{\mathcal{L}_t}(\mathcal{X}_{\mathcal{L}_t}, \mathcal{X}_{\mathcal{L}_t})$ and ${\Theta_{\mathcal{L}_t}(\mathcal{X}_{\mathcal{L}_t}, \mathcal{X}_{\mathcal{L}_t})}^{-1}$ \\
    \FOR{$x'$ {\bfseries in} $\mathcal{U}_{t}$}
        \STATE Estimate the label for $x'$ as $y' = \argmax f_{\mathcal{L}_t}(x')$
        \STATE Approximate $f_{{\mathcal{L}_t} \cup (x', y')}$ using \eqref{eq:lin_approx} and \eqref{eq:lin_block} for faster computation
        \STATE Track $x^*$ as the point with maximal $A_\textit{MLMOC}(x', f_{\mathcal{L}_t}, \mathcal{L}_t,\, \mathcal{U}_t)$ we've seen so far 
    \ENDFOR
    \STATE Obtain the label $y^*$ for $x^*$ from the oracle
    \STATE Update the labeled set $\mathcal{L}_{t+1} = \mathcal{L}_{t} \cup \{(x^*, y^*)\}$ and unlabeled set $\mathcal{U}_{t+1} = \mathcal{U}_{t} \setminus \{(x^*, y^*)\}$
    \STATE Train $f_{\mathcal{L}_t}$ on $\mathcal{L}_{t+1}$, using SGD, to obtain $f_{\mathcal{L}_{t+1}}$
  \ENDFOR
\end{algorithmic}
\end{algorithm}

\paragraph{MLMOC querying.}
Instead of the EMOC acquisition function \eqref{eq:emoc}, we define a new function more suitable for a linearized network, which we term \textit{Most Likely Model Output Change} (MLMOC):
\begin{align} \label{eq:mlmoc}
    A_\textit{MLMOC}(x', f_{\mathcal L}, \mathcal{L}_t,\, \mathcal{U}_t)
    \vcentcolon=
    \sum_{x \in \mathcal{U}} \lVert f_{\mathcal L}(x) - f^\textit{lin}_{\mathcal L^+}(x)) \rVert_2
\end{align}
where
$\mathcal L^+ = \mathcal L \cup \{(x', y')\}$ with $y' = \argmax f_{\mathcal L}(x')$.
We only consider the most likely pseudo-label, instead of the expectation;
this saves a significant amount of computation, by a factor of the number of classes, but empirically does not hurt accuracy. %
We also suspect that, when $f_{\mathcal L}$ is already reasonably accurate,
we can ``trust'' the most likely label more than we can trust accurate estimation of probabilities for low-probability losses,
especially when training networks with $L_2$ loss.
\cref{alg:al} shows pseudo-code for the proposed algorithm.

\paragraph{Sequential query strategy.}
In the active learning literature, each query of new data points is essentially always followed by retraining the underlying model.
This, however, may not be practical for the cases where annotation is quick but SGD training of a large neural network is slow.
Although there has been significant effort towards batch selection strategies based on finding diverse examples to query in a batch \citep[e.g.][]{batch_dpp2019biyik,badge2019ash,batchbald2019kirsch}, which among other benefits can minimize the number of times we must retrain with SGD,
they cannot fully take advantage of low-latency annotations if they are available.
(This may be the case if, for instance, labeling tasks can be quickly pushed out to a pool of human labelers available for many such annotation tasks at once, or other cases where a limited number of queries are desired to avoid expense, damage to a system being measured, or so on, but those individual measurements can still be taken promptly.)

Our proposed linearized network can take advantage of sequentially added annotations without requiring SGD retraining steps.
Given an empirical NTK $\Theta$, adding another training point $(x, y)$ requires nothing more than adding a row and column to the kernel.
As a result, computing $f^\textit{lin}$ and following $A_\textit{MLMOC}$ with an additional data point $(x, y)$ is almost instant.
This approximation can utilize the information from the label $y$ directly while batch selection strategies cannot.
(After adding a substantial number of points, we will want to retrain, but it is not needed after every new data point.)
We experimentally demonstrates the effectiveness of this sequential query strategy in \cref{sec:experiment}.

\section{Related Work}
\label{sec:related_work}

Two main approaches in pool-based active learning are uncertainty and representation-based methods.
The general goal of the uncertainty-based methods is to query the most ``informative" data points, thus, their acquisition functions vary depending on how to measure the informativeness of a data point.
As simple but effective uncertainty-based querying methods, posterior probability~\cite{heterogeneous1994lewis,sequential1994lewis}, entropy~\cite{entropy2014wang}, margin sampling~\cite{margin2001scheffer} and least confident~\cite{al2009settles}, have been widely used in practice.
They do not, however, take into account how a candidate data point would change a model, or how this change would interact with unseen examples.
More advanced querying methods have thus been proposed as discussed in \cref{sec:preliminary}.
There are also Bayesian uncertainty-based methods,
particularly Bayesian Active Learning by Disagreement (BALD) \citep{bald2011houlsby},
which measures the mutual information between a new data point and model parameters,
which \citet{bald_mcdropout017gal} propose to estimate efficiently with Monte Carlo dropout networks~\cite{mcdropout2016gal}.
The goal of the representation-based methods is to query examples that are the most ``representative'' among the unlabeled set $\mathcal{U}$, hoping that doing well on those examples leads to doing well on the whole unseen dataset.
\citet{coreset2017sener} define a core set as a batch of images that minimize the distance between unlabeled and labeled images if added to the labeled pool. 
\citet{batchbald2019kirsch} further expand BALD to a batch setting using a greedy algorithm. 
\citet{batch_dpp2019biyik} propose to use determinantal point processes to target diversity of the queried examples while maintaining informativeness.
BADGE~\cite{badge2019ash} measures the uncertainty using a gradient norm but also encourage the diversity of queried images using k-means++ seeding~\cite{kmeans++2007artur}.

Other approaches include that of \citet{ll4al2019yoo}, who employ an additional loss prediction module to an existing neural network.
However, the additional loss computation module can make training unstable (as we observed in some experiments).
\citet{generative2019tran} improves BALD by adding a variational autoencoder and auxiliary classifier generative adversarial network, from which a synthesized data point $x'$ similar to the queried image $x^*$ is generated.

Our proposed method is different from the previous works in that it ``looks one step ahead" instead of relying on the current state of a model.
Perhaps the most similar proposal to ours is that of \citet{bilevel2021borsos}, who propose using the infinite NTK to solve a bi-level optimization problem for semi-supervised batch active learning. 
Aside from the difference in our objective functions, however, using the infinite NTK to approximate a neural network in active learning does not seem to be a good choice; as the size of training set increases, the error of the approximation can be quite large.
We explore the difference between infinite and empirical NTKs experimentally in \cref{sec:experiment}.
\danica{Other NTK AL papers -- there are one or two...}

\section{Experiment Results}
\label{sec:experiment}

\paragraph{Datasets.}
To demonstrate the effectiveness of the proposed method, we provide experimental results on three benchmark datasets for classification tasks: MNIST~\citep{mnist2012deng}, SVHN~\citep{svhn2011netzer}, CIFAR10~\citep{cifar10krizhevsky}, and CIFAR100~\citep{cifar10krizhevsky}.
We provide the detailed description of each dataset in \cref{app:sec:datasets}.

\paragraph{Implementation.}
We implement a pipeline for the proposed method using PyTorch~\citep{pytorch2019} and Jax~\citep{jax2018github};
our neural networks $f$ are implemented in PyTorch, whereas the linearized models $f^\textit{lin}$ are implemented in Jax with the \texttt{neural-tangents} library~\citep{neuraltangents2020}.
We employ a ResNet18~\citep{Resnet2015He} and WideResNet~\citep{wide_resnet2016zagoruyko} with one or two layers and maximum width of $640$, which is wide enough for a linearized neural network to be a good approximation, while still being powerful enough for these datasets.
As we will see in our experiments, even for narrower networks, the proposed method outperforms random acquisition strategy by large margins.

As mentioned earlier, we use $L_2$ loss, rather than cross-entropy;
this allows for a faster NTK approximation,
as in the formulas of \cref{sec:our_approach},
rather than requiring differential equation solvers.
Appropriately tuned $L_2$ loss is as effective as cross-entropy \citep{l2_vs_ce2020hui}.
To ensure that the empirical NTK of the training data ($\mathcal{X}_{\mathcal{L}}$) is positive semidefinite, in batch normalization layers, we freeze the current running statistics.
We implement LL4AL~\citep{ll4al2019yoo} and BADGE~\citep{badge2019ash} based on their publicly available code.
We provide anonymous code for the full pipeline in the supplementary material.

\paragraph{Query scheme.}
Following active learning literature, for each query step, we randomly draw a subset from an unlabeled set, and query a fixed number of examples from the subset.
To build a batch, we take the points with the highest $A_\textit{MLMOC}$ scores,
which we found to perform well without any batch diversity requirements.
At each cycle, we query $20$, $1\,000$, $1\,000$, and $1\,000$ new data points on MNIST, SVHN, CIFAR10, and CIFAR100 from the subset of $4\,000$, $6\,000$, $6\,000$, and $6\,000$ unlabeled data points, and initialize the labeled set $\mathcal{L}$ with $100$, $1\,000$, $1\,000$, and $10\,000$ data points.

\paragraph{Making look-ahead acquisitions feasible.}
In \cref{fig:naive_ntk}, we demonstrate the NTK approximation indeed makes it feasible to use look-ahead acquisition functions.
We run the MLMOC acquisition function with na\"ive SGD retraining (Naive) and proposed NTK approximation with (NTK) and without (NTK w/o Block) block computation in \eqref{eq:lin_block} on MNIST.
For the na\"ive version, we retrain the neural network for $15$ epochs using SGD.
The line plots represent accuracy whereas bar plots represent wall-clock time to query data, using a NVIDIA V100 GPU.
The NTK approximation is not only computationally efficient (more than $100$ times faster than Na\"ive) but also yields significantly more accurate models.
Although one might expect that Na\"ive is an upper bound for the NTK approximation, as it actually ``looks ahead,'' in reality it is hard to retrain a neural network until convergence for every candidate data point;
a ``proper'' look-ahead method should perhaps look at an ensemble of several training runs, with different learning rate schedules, etc.

\begin{figure}[t!]
\captionsetup[subfigure]{justification=centering}
    \centering
    \begin{subfigure}[b]{0.48\textwidth}
        \centering
        \includegraphics[width=\textwidth]{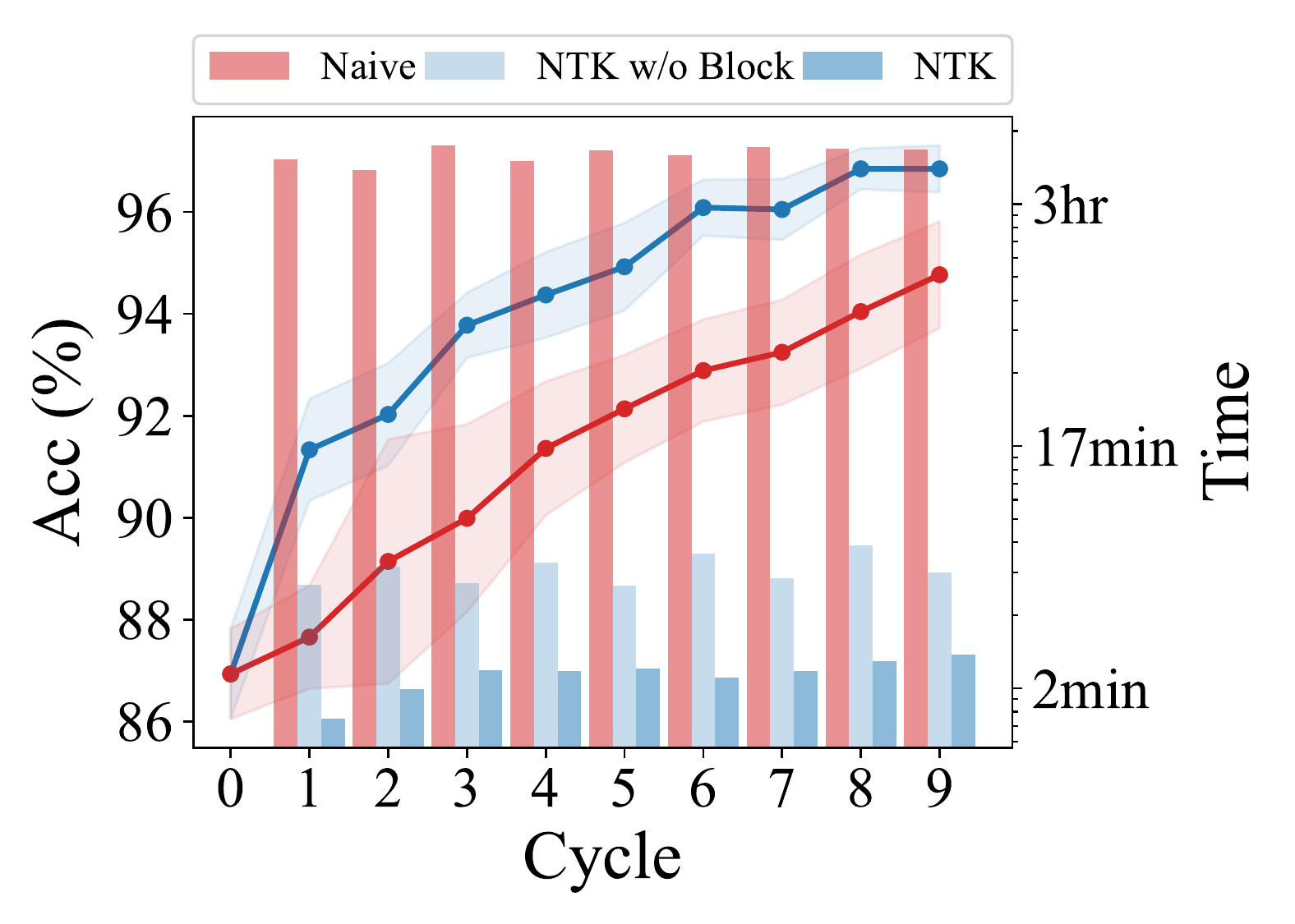}
        \caption{Na\"ive look-ahead acquisition versus NTK approximation. Bars show runtime per cycle.}
        \label{fig:naive_ntk}
    \end{subfigure}
    \hfill
    \begin{subfigure}[b]{0.48\textwidth}
        \centering
        \includegraphics[width=\textwidth]{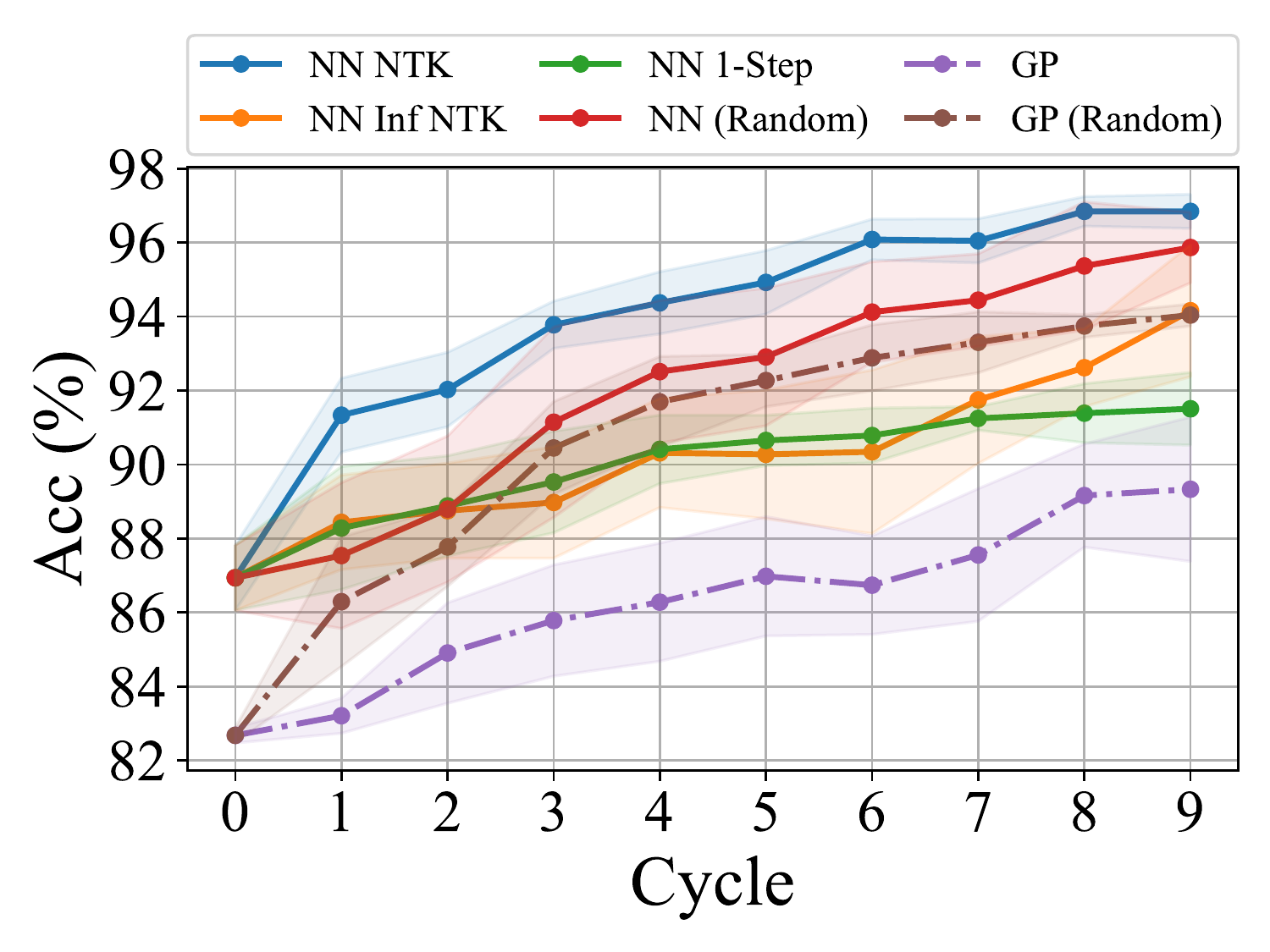}
        \caption{MLMOC on various models, along with random-acquisition baselines.}
        \label{fig:look_aheads}
    \end{subfigure}
    \caption{Comparisons of several related approaches on MNIST.}
    \vspace{-3mm}
\end{figure}

\paragraph{Comparison of look-ahead acquisitions.}
Having seen that the empirical NTK effectively approximates (even outperforms) the Na\"ive look-ahead acquisition function, we now show the superiority of the empirical NTK over existing look-ahead methods.
In \cref{fig:look_aheads}, we provide four look-ahead methods using MLMOC acquisition function along with their corresponding random baselines (Random).
NN refers to a neural network-based model whereas GP refers to a model based on Gaussian process.
Instead of using empirical NTK as we propose (denoted as NN NTK, blue), \citet{bilevel2021borsos} use infinite NTK (NN Inf NTK, orange).
\citet{emoc2016kading} approximate SGD retraining using only one-step SGD update (NN 1-step, green) whereas \citet{emoc_reg2018kading} exactly compute look-ahead acquisition using GP for regression problems.
We modify their model for classification tasks, use an infinite NTK, and denote it as GP (purple).
As shown in \cref{fig:look_aheads}, ours is the only of these methods to outperform random baselines here.
Neither infinite NTK nor 1-step approximation are good approximation for retraining steps, especially as more data points are added.
Also, although GP~\citep{emoc_reg2018kading} works well on regression tasks, it struggles on classification tasks.

\paragraph{Comparison with state-of-the-art.}
In \cref{fig:sota}, we compare the proposed NTK active learning to state-of-the-art methods -- Random, Entropy, Margin sampling (Margin)~\citep{margin2006roth}, BADGE~\citep{badge2019ash}, and LL4AL~\citep{ll4al2019yoo} -- on SVHN, CIFAR10, and CIFAR100 with different architectures; we provide additional results in \cref{app:sec:additional_sota}.
For clarity, we show the difference between each method and Random.
The numbers below Random line give the accuracies of Random at each cycle. 
As mentioned earlier, training LL4AL is often unstable due to loss computation module;
in particular, it performs too poorly to show on the same plot for \cref{fig:sota_cifar10} and \cref{fig:sota_cifar100}.
We show the instability of training LL4AL with varying learning rate in \cref{app:sec:ll4al}.
Each experiment is run for six different seeds;
lines show the mean performance, and shading shows a $95\%$ confidence interval for the mean.
\cref{fig:sota}(a) shows that the NTK method outperforms all the state-of-the-art methods throughout training on SVHN.
A similar pattern is observed on CIFAR10 and CIFAR100; NTK is consistently comparable to or better than the best competitor methods,
far outperforming existing look-ahead approaches.

\begin{figure*}[t!]
\captionsetup[subfigure]{justification=centering}
    \centering
    \begin{subfigure}[b]{0.32\textwidth}
        \includegraphics[width=\textwidth]{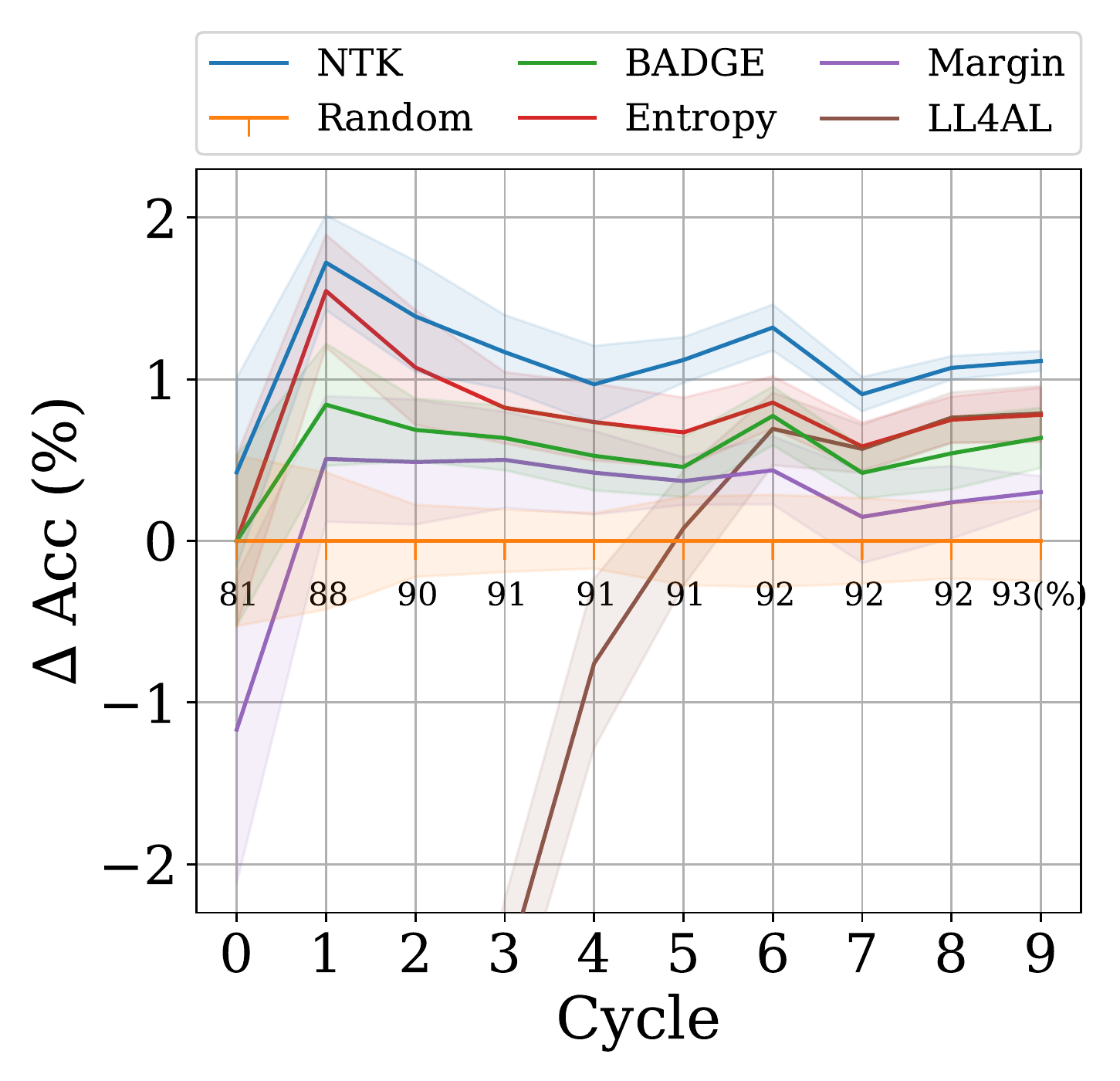}
        \caption{SVHN: 1-layer WideResNet}
        \label{fig:sota_svhn}
    \end{subfigure}
    \hfill
    \begin{subfigure}[b]{0.32\textwidth}
        \includegraphics[width=\textwidth]{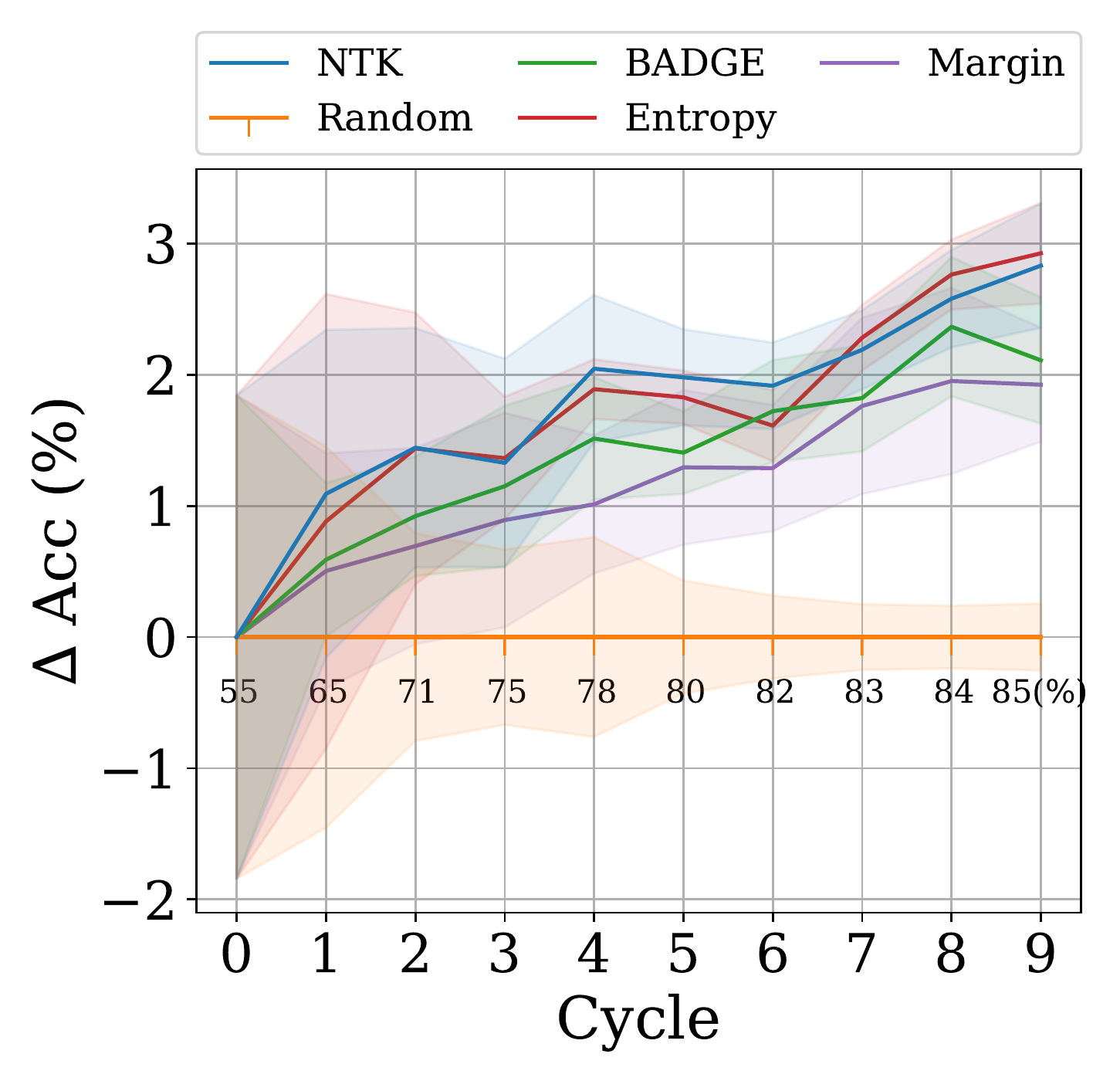}
        \caption{CIFAR10: 2-layer WideResNet}
        \label{fig:sota_cifar10}
    \end{subfigure}
    \hfill
    \begin{subfigure}[b]{0.32\textwidth}
        \includegraphics[width=\textwidth]{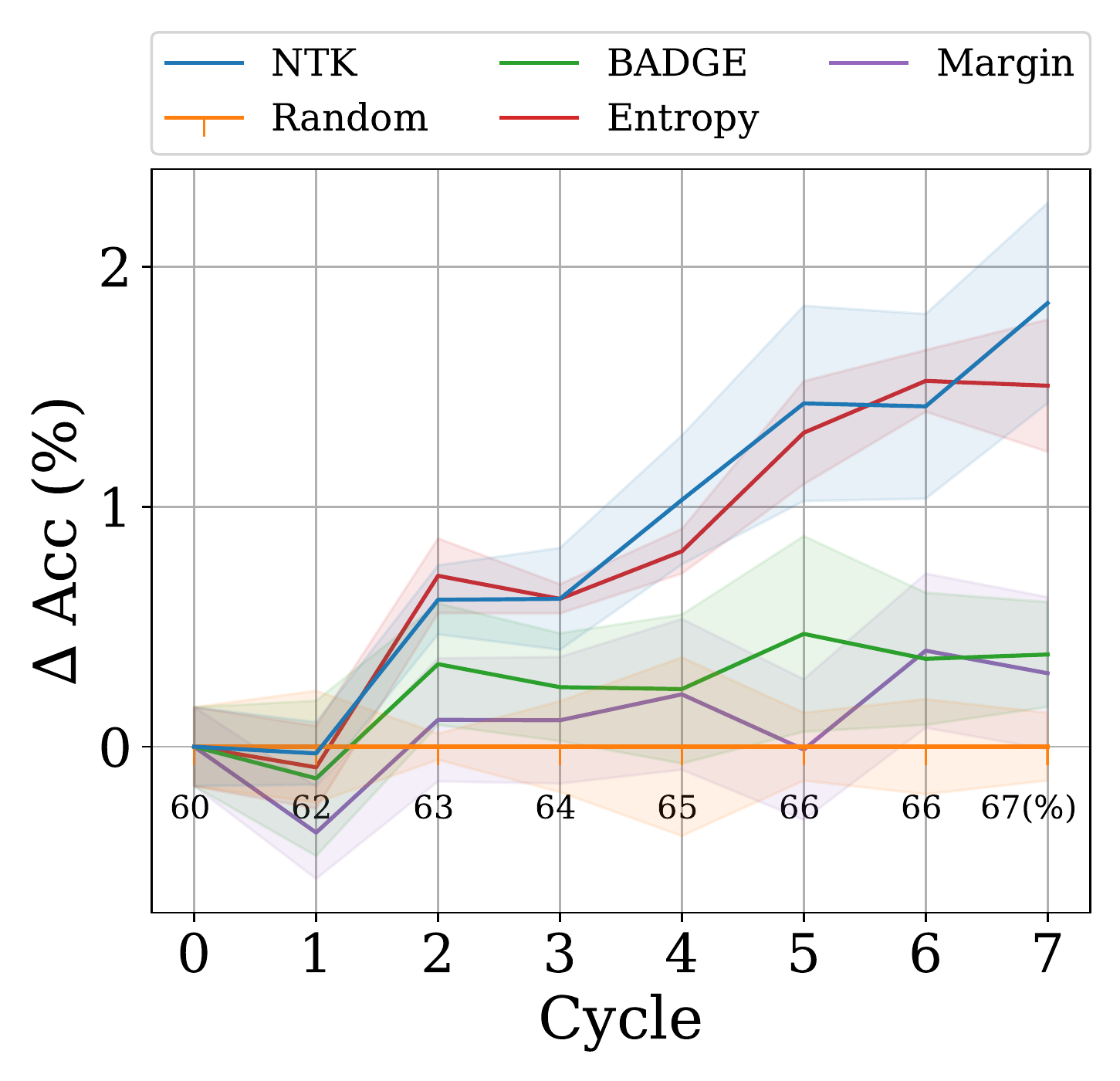}
        \caption{CIFAR100: ResNet18}
        \label{fig:sota_cifar100}
    \end{subfigure}
    \caption{Comparison of the-state-of-the-art active learning methods on various benchmark datasets. Vertical axis shows difference from random acquisition, whose accuracy is shown in text.}
    \label{fig:sota}
    \vspace{-2mm}
\end{figure*}

\paragraph{Sequential query strategy.}
As mentioned in \cref{sec:our_approach}, one great advantage of the NTK approximation is that it can exploit additional annotations without needing to run SGD training for a neural network, which cannot be done in existing active learning methods for neural networks.
On MNIST, \cref{fig:sequential} compares sequential NTK where the oracle provides true label for every new queried data point, to batch NTK where annotation is done batch-wise as with a normal active learning setting.
Sequential NTK outperforms batch NTK; especially, sequential NTK reaches more than $94\%$ in one cycle, and in cycle 3, it has already reached $96\%$ accuracy, where batch NTK plateaus.
We expect the sequential NTK can be useful in practice when annotation is much faster than SGD training.

\begin{figure}[t!]
    \centering
    \begin{subfigure}{0.47\textwidth}
        \includegraphics[width=\textwidth]{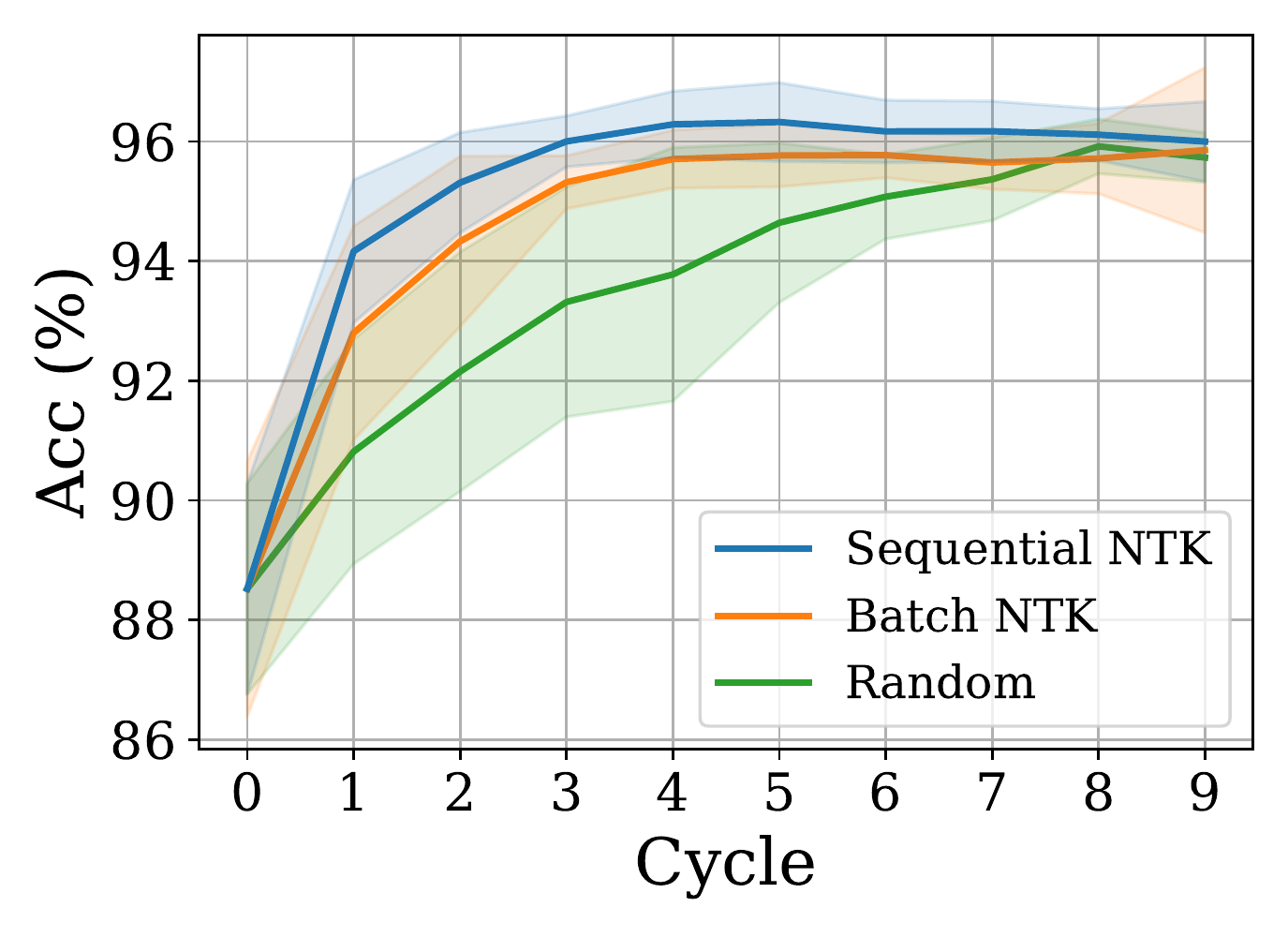}
        \caption{Accuracies of sequential versus batch querying strategies, compared to a random baseline.}
        \label{fig:sequential}
    \end{subfigure}
    \hfill
    \begin{subfigure}{0.47\textwidth}
        \includegraphics[width=\textwidth]{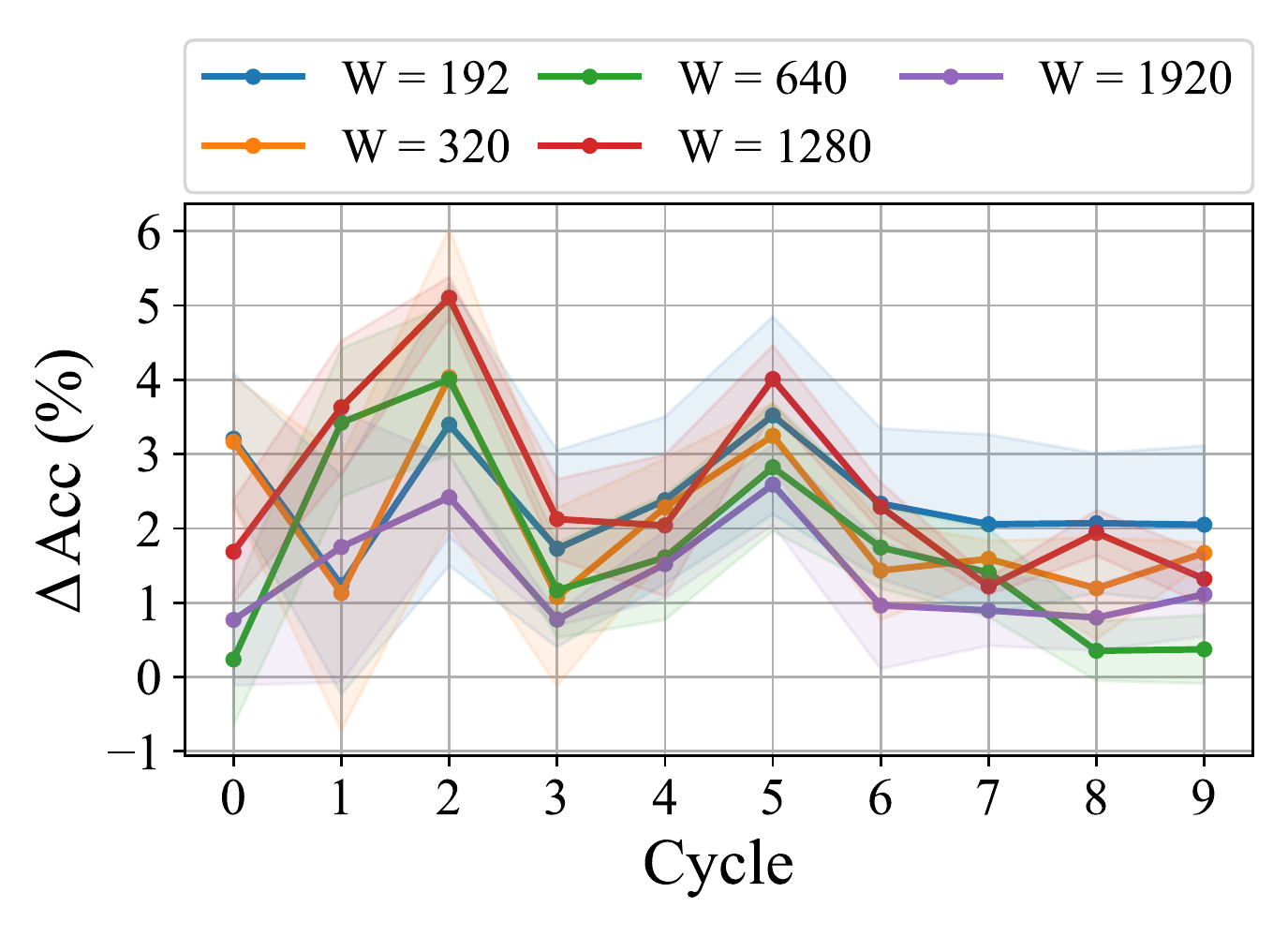}
        \caption{Our method's improvements in accuracy over random acquisition for WideResNets of various width.}
        \label{fig:widen_factor}
    \end{subfigure}
    \caption{Further comparisons of algorithm variants on MNIST.}
    \vspace{-3mm}
\end{figure}

\paragraph{Varying model widths.}
The NTK approximation gets closer to the dynamics of a neural network as the width of the neural network increases.
But, since very wide networks are computationally expensive in practice, it is important to empirically demonstrate that the NTK approximation works even with reasonably narrow networks.
To this end, we vary the maximum width of the WideResNet with one block layer we use for the experiments as shown in \cref{fig:widen_factor}.
We can observe that regardless of the maximum width, the NTK method significantly outperforms Random strategy until convergence,
indicating the approximation is good enough for active learning purposes.

\section{Conclusion}
\label{sec:conclusion}
We propose an algorithm to make look-ahead acquisition strategies feasible for active learning with fairly general neural networks.
We prove that the outputs of wide enough neural networks can be efficiently approximated using a linearized network using empirical neural tangent kernels.
We empirically show that this approximation is at least $100$ times faster than na\"ive computation of look-ahead strategies, while even being more accurate.
Armed with this approximation, we significantly outperform previous look-ahead strategies, and achieve equal or better performance compared to the state-of-the-art methods on four benchmark datasets in active learning regime.
We also propose a new sequential active learning algorithm that decouples querying and SGD training for the first time.

One limitation of the NTK approximation is that it is still slower than common ``myopic`` active learning methods such as entropy and BADGE.
Compared to our proposed method where it takes $\mathcal O(L\sqrt{L}P)$ using conjugate gradient kernel regression solvers, entropy requires $O(UP)$ time (using notations in \cref{sec:our_approach}).
Improving time complexity of the NTK approximation would be an interesting future work along with devising better look-ahead acquisition functions that improve performance.

\begin{ack}
This research was enabled in part by support, computational resources, and services provided by the Canada CIFAR AI Chairs program, the Natural Sciences and Engineering Research Council of Canada, Advanced Research Computing at the University of British Columbia, WestGrid, Calcul Québec, and the Digital Research Alliance of Canada. We would like to specifically thank Roman Baranowski for helping us with our computational requirements.
\end{ack}

\printbibliography

\newpage
\appendix
\onecolumn
\section{Proof of \cref{th:ws_cs}}

In this section, we show that in the infinite width limit, warm starting a neural network will result in the same predictions as the cold started variant when trained using gradient descent for $t=\infty$. We give the proof for a feed-forward neural network, but as shown by \citet{yang2021tensor}, it is trivial to show that the same proof structure can be used for any other architecture.

\paragraph{Background}
We mostly use the same notation as \citet{ntk2018jacot}. We focus on fully connected feed-forward neural networks with $L+1$ hidden layers numbered from $0$ (input) to $L$ (output), where each layer has $n_0, \dots, n_L$ neurons. We assume that its nonlinearity is a Lipschitz, twice differentiable function $\sigma: \mathbb{R} \to \mathbb{R}$. Such a network has $P = \sum_{l=0}^{L-1}{(n_l + 1) n_{l+1}}$ parameters: A weight matrix $W^{(l)} \in \mathbb{R}^{n_l \times n_{l+1}}$ and a bias vector $b^{(l)} \in \mathbb{R}^{n_{l+1}}$ per layer. This network is defined as $f_\theta$ where $\theta = \cup_{l=0}^{L}{\theta^l}$ and $\theta^l = vec(W^{(l)}, b^{(l)})$. The function $f_\theta$ is characterized according to the definition:
\begin{align}
    \begin{split}
        \alpha^{(0)}(x; \theta) &= x \\
        \tilde{\alpha}^{(l+1)}(x; \theta) &= \frac{1}{\sqrt{n_l}} W^{(l} \alpha^{(l)}(x; \theta) + \beta b^{(l)} \\
        \alpha^{(l)}(x;\theta) &= \sigma(\tilde{\alpha}^{(l+1)}(x; \theta)) \\
        f_\theta(x) &\coloneqq \tilde{\alpha}^{(L)}(x; \theta)
    \end{split}
\end{align}

We define the \textit{realization function} of the network as $f_{(L)}: \mathbb{R}^P \to \cal{F}$, the function mapping parameters $\theta: \mathbb{R}^P$ to functions $f_\theta \in \mathcal{F}$ where $\mathcal{F}$ is the space of all neural networks of this architecture.

Assuming that the training set is defined as $\mathcal{D} \subseteq \mathbb{R}^{n_0} \times \mathbb{R}^{n_L}$, we define $p^{in}$ to be the fixed empirical distribution of the finite input dataset $\{(x_1, y_1), (x_2, y_2), \cdots, (x_N, y_N)\}: \frac{1}{N} \sum_{i=0}^N \delta_{x_i}$. Thus, $\cal{F}$ is defined as all the functions $\{f: \mathbb{R}^{n_0} \to \mathbb{R}^{n_L}\}$. On this space, we consider the seminorm $||\cdot||_{p^{in}}$, defined as
\begin{equation}
    {\langle f, g \rangle}_{p^{in}} = \mathbb{E}_{x \sim p^{in}} [{f(x)}^T g(x)].
\end{equation}

The dual space of $\cal{F}$ with respect to $p^{in}$ can be defined as ${\cal{F}}^*$. ${\cal{F}}^*$ is the set of all linear forms $\mu: \cal{F} \to \mathbb{R}$. We can define each element of this set as $\mu = {\langle d, \cdot \rangle}_{p^{in}}$ for some $d \in \cal{F}$.

In the process of optimizing the parameters of a neural network $f$, we define the cost functional as: $C(f) = \frac{1}{N} \sum_{i=1}^N{(f(x_i) - y_i)^2}$. As we assumed that $p^{in}$ is fixed, the value of $C(f)$ only depends on the values of $f \in \cal{F}$ at the datapoints $(x,y) \in p^{in}$. Thus, the functional derivative of the cost functional can be viewed as a member of the dual space of $\cal{F}$, namely ${\cal{F}}^*$. We note by $d|_{f_0} \in \cal{F}$ the corresponding dual element of partial derivative of the cost functional with respect to the function $f$ at $f_0$, namely: $\partial^{in}_{f} C |_{f_0}$. Thus, $\partial^{in}_{f} C |_{f_0} = {\langle d|_{f_0}, \cdot \rangle}_{p^{in}} $.

According to \citet{ntk2018jacot}, \textit{Kernel Gradient} $\nabla_\mathcal{K} C|_{f_0} \in \cal{F}$ is defined as $\Phi_\mathcal{K} \left( \partial^{in}_{f} C |_{f_0} \right)$ where $\Phi$ is a map from ${\cal{F}}^*$ to $\cal{F}$ defined as: $[\Phi_\mathcal{K}(\mu)]_{i, \cdot}(x) = {\langle d, \mathcal{K}_{i,\cdot} (x, \cdot) \rangle}_{p^{in}}$ where $\mu = {\langle d, \cdot \rangle}_{p^{in}}$. Here, $\mathcal{K}$ refers to a multi-dimensional kernel which is defined as a function $\mathbb{R}^{n_0} \times \mathbb{R}^{n_0} \to \mathbb{R}^{n_L \times n_L}$ such that $\mathcal{K}(x, x') = \mathcal{K}(x', x)^\top$. They showed that when trained using gradient descent on $C \circ F_{(L)}$, the neural network function's evolution in time can be captured using \textit{kernel gradient descent} with the corresponding Neural Tangent Kernel (NTK):  $\partial_t {f_\theta}_{(t)} = - \nabla_{\mathcal{K}} C|_{{f_\theta}_{(t)}}$ where $\mathcal{K}$ is the corresponding Neural Tangent Kernel of ${f_\theta}_{(t)}$. For simplicity, from here onwards, we drop the $\theta$ index from a function $f_{\theta_{(t)}}$ and show it by $f_t$. Since one can define a map between the time $t$ and function $f_{\theta_{(t)}}$ when the training setting is fixed, this doesn't create any confusion. Thus, the cost functional evolves as
\begin{equation}
    \partial_t C|_{f_{t}} = - {\langle d|_{f_{t}}, \nabla_{\mathcal{K}} C|_{f_{t}} \rangle}_{p^{in}}
\end{equation}
Here, $-d|_{f_{t}} \in \cal{F}$ defines the training direction of function $f_{(t)}$ in the function space $\cal{F}$ while being trained using gradient descent (flow). As mentioned earlier, when the we train using $C \circ F_{(L)}$ loss, this training direction is the dual of $\partial_{f}^{in}C|_{f_{t}}$.

\begin{setting} \label{setting}
We have $C$ overlapping datasets, $S_1, S_2, \dots, S_C$ such that $\forall i > j; S_j \subset S_i$ and $\forall i \in [1, C]; S_i = \{(x_1, y_1), \dots, (x_{m_i}, y_{M_i})\}$ where $M_i$ is the size of $i$'th dataset. We initialize the network $f$ with parameters $\theta_0$ such that $\theta_0$ satisfies the initialization criteria mentioned in \citet{ntk2018jacot} (namely, LeCun initialization). We define $f_0$ as $f$ with parameters $\theta_0$. $f$ has $L+1$ layers such that in the limit $n_0, n_1, \dots, n_L \to \infty$ sequentially. We train $f$ sequentially on all the sets using gradient descent with $C \circ F_{L}$ loss for infinite time:
\begin{equation}\label{eq:warm_start}
    f_0 \xrightarrow[t=\infty]{S_1} f_{S_1} \xrightarrow[t=\infty]{S_2} f_{S_{1,2}} \rightarrow \dots \xrightarrow[t=\infty]{S_C} f_{S_{1, 2, \dots, C}} 
.\end{equation}
\end{setting}

In the following, we first note that starting from the initialization and training the network according to \eqref{eq:warm_start}, the sum of integrals of the training directions remains stochastically bounded (\cref{re:bounded_dt}). Thus, based on Theorem 2 in \cite{ntk2018jacot}, the corresponding Neural Tangent Kernel of $f$ remains asymptotically constant (and according to their Theorem 1, it converges in probability to the limiting Neural Tangent Kernel at initialization). Next, in \cref{th:app-warmstart}, we prove that under this setting, the resulting function of sequential warm-start training on $S_1$ to $S_C$, is the same as the function that we get when doing simple gradient descent on only the last dataset $S_C$ when starting from initialization (Also known as cold-start). In other words, $\forall x \in \mathbb{R}^{n_0}; f_{S_{1,\dots,C}}(x) = f_{S_C}(x)$ where $f_{S_C}$ is derived using $f_0 \xrightarrow[t=\infty]{S_C} f_{S_C}$.

\begin{remark}\label{re:bounded_dt}
Under \cref{setting}, assuming that we have a function $\mathcal{T}: \mathbb{R} \to \mathbb{R}$ which gets past time since starting gradient descent as the input and outputs the index $i$ of the dataset $S_i$ currently being trained in the sequential training process, $\lim_{T\to\infty} \bigintssss_{\,t=0}^T ||d|^{S_{\mathcal{T}(t)}}_{f_t}||_{p^{in}} dt$ is stochastically bounded.
\end{remark} 

\begin{proof}
We start by deriving the Gâteaux derivative of the cost functional $C(f) = \frac{1}{N} \sum_{i=1}^{N} || f(x) - y ||^2_2$:
\begin{equation} \label{eq:app:mse_derivative}
 \partial_f^{in} C|_{f_t} (f) = \frac{2}{N} \sum_{i=1}^{N} f(x_i)^\top \left(f_t(x_i) - y_i \right)
\end{equation}
This shows the amount of change in $C(f_t)$ when $f_t$ is moved towards $f$ by an infinitesimal $t$. As mentioned before, this functional derivative only depends on the values of $f$ on the datapoint in $p^{in}$. Thus, it's in the dual space of $\cal{F}$, noted by $\cal{F}^*$ and we can write it as ${\langle d|_{f_t}, f \rangle}_{p^{in}}$. We're interested in deriving the closed form of $d|f_{t}$ in this inner product.
\begin{equation}
    {\langle d|_{f_t}, f \rangle}_{p^{in}} = \frac{1}{N} \sum_{i=1}^N d|_{f_t}(x_i)^\top f(x_i) = \frac{2}{N} \sum_{i=1}^{N} f(x_i)^\top \left(f_t(x_i) - y_i \right)
\end{equation}
This implies
\begin{equation}
    \forall (x_i, y_i) \in p^{in}; \; d|_{f_t}(x_i) = \frac{1}{2} \left(f_t(x_i) - y_i \right) .
\end{equation}
If we define $f^*(x)$ as the function that maps each datapoint to its label on $p^{in}$ and is arbitrary elsewhere, $d|_{f_t}(x) = \frac{1}{2} \left(f_t(x) - f^*(x) \right)$ on $p^{in}$.
Accordingly, when we train using only a portion of the data $S_i \subseteq p^{in}$ such that $\mathcal{X}_{S_i} = \{x: (x, y) \in S_i\}$, but the cost functional still operates on whole of $p^{in}$, we have that:
\begin{equation}
    d|^{S_i}_{f_t} (x) = \frac{I(x \in \mathcal{X}_{S_i})}{2} \left(f_t(x) - f_{S_i}^*(x) \right)
\end{equation}
where $f_{S_i}^*$ is defined on $S_i$ as $f^*$ is on $p^{in}$. To analyze $\bigintssss_{\,t=0}^T ||d|^{S_i}_{f_t}||_{p^{in}} dt$, we first derive $||d|^{S_i}_{f_t}||_{p^{in}}$:
\begin{align}
\begin{split}
        ||d|^{S_i}_{f_t}||_{p^{in}} &= \mathbb{E}_{x \sim p^{in}}\left[\left( d|^{S_i}_{f_t}(x) \right)^2\right] = \frac{1}{4N} \sum_{x_j \in \mathcal{X}_{S_i}} || f_t(x_j) - f^*_{S_i}(x_j) ||_2^2
\end{split}
\end{align}

As \citet{ntk2018jacot} (Section 5) mentioned, as $t$ grows, this norm is strictly decreasing and thus the integral is bounded. Moreover, as we're following the direction of gradient flow in the functional space, $\exists t \ge 0 \; f_t(x_j) = f^*_t(x_j)$ for all datapoints $x_j$ in $S_i$. Thus, $\lim_{T\to\infty} \bigintssss_{\,t=0}^T ||d|^{S_i}_{f_t}||_{p^{in}} dt$ is also stochastically bounded. We can apply the same structure for the case where we start from $f_{S_i}$ and perform gradient flow towards $f_{S_{i+1}}$, showing that the integral in the infinite time limit is also stochastically bounded. Thus, it's straightforward to show that using induction, when training the neural network sequentially on $S_1, S_2, \dots, S_C$ for sequentially infinite time, the integrals of training directions remain stochastically bounded.
\end{proof}

\begin{theorem} \label{th:app-warmstart} Under \cref{setting}, the following equality holds:
\begin{equation}
    \forall x \in \mathbb{R}^{n_0} \quad f_{S_{1,\dots,C}}(x) = f_{S_C}(x)
\end{equation}
\end{theorem}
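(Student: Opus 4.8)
The plan is to reduce this dynamical statement to a purely linear-algebraic uniqueness argument, once the kernel has been pinned down. First I would invoke \cref{re:bounded_dt}: because the accumulated training directions remain stochastically bounded across \emph{all} phases of the sequential process \eqref{eq:warm_start}, Theorems~1 and~2 of \citet{ntk2018jacot} guarantee that in the limit $n_0,\dots,n_L\to\infty$ the empirical tangent kernel stays asymptotically constant and converges in probability to the fixed limiting NTK $\mathcal{K}$ at initialization. Consequently every training phase is governed by kernel gradient descent with the \emph{same} fixed kernel $\mathcal{K}$, and the $t=\infty$ flow on a dataset with inputs $\mathcal{X}$ and labels $\mathcal{Y}$, started from an arbitrary function $g$, converges to
\[
    g_\infty(x) = g(x) + \mathcal{K}(x,\mathcal{X})\,\mathcal{K}(\mathcal{X},\mathcal{X})^{-1}\bigl(\mathcal{Y}-g(\mathcal{X})\bigr),
\]
the same formula as \eqref{eq:f_inf_jac} but with the starting function $g$ in place of the random initialization (the derivation is identical, since the flow depends only on the residual $g_t(\mathcal{X})-\mathcal{Y}$ on the training inputs).

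From this closed form I would extract two structural properties of a single converged phase: (i) \emph{interpolation}, $g_\infty(\mathcal{X})=\mathcal{Y}$, obtained by evaluating the formula at the training inputs and cancelling $\mathcal{K}(\mathcal{X},\mathcal{X})\mathcal{K}(\mathcal{X},\mathcal{X})^{-1}$; and (ii) \emph{containment}, that $g_\infty-g$ lies in the span of the kernel sections $\{\mathcal{K}(\cdot,x):x\in\mathcal{X}\}$, with coefficient vector $\mathcal{K}(\mathcal{X},\mathcal{X})^{-1}(\mathcal{Y}-g(\mathcal{X}))$. Applying these to the cold-start run immediately shows that $f_{S_C}$ interpolates $S_C$ and that $f_{S_C}-f_0$ lies in the span of $\{\mathcal{K}(\cdot,x):x\in\mathcal{X}_{S_C}\}$. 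For the warm-start run I would argue by induction on the phase index: since $S_i\subseteq S_C$ for every $i$, property (ii) at each phase keeps $f_{S_{1,\dots,k}}-f_0$ inside the span of $\{\mathcal{K}(\cdot,x):x\in\mathcal{X}_{S_C}\}$, and property (i) applied to the final phase (which trains on all of $S_C$) shows that $f_{S_{1,\dots,C}}$ interpolates $S_C$.

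The two functions $f_{S_C}$ and $f_{S_{1,\dots,C}}$ therefore both lie in the affine family $\{f_0+\mathcal{K}(\cdot,\mathcal{X}_{S_C})\,\alpha\}$ and both agree with $\mathcal{Y}_{S_C}$ on $\mathcal{X}_{S_C}$, so I would finish with a representer-type uniqueness step: their difference equals $\mathcal{K}(\cdot,\mathcal{X}_{S_C})\gamma$ for some coefficient $\gamma$, and vanishes on $\mathcal{X}_{S_C}$, whence $\mathcal{K}(\mathcal{X}_{S_C},\mathcal{X}_{S_C})\gamma=0$; invertibility (positive definiteness) of the Gram matrix $\mathcal{K}(\mathcal{X}_{S_C},\mathcal{X}_{S_C})$ forces $\gamma=0$, and hence $f_{S_{1,\dots,C}}(x)=f_{S_C}(x)$ for all $x$. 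I expect the genuine difficulty to be concentrated entirely in the first step --- certifying that the kernel really does remain frozen across the whole concatenation of warm-started phases, rather than merely within a single training run --- which is precisely the role of the stochastic-boundedness estimate in \cref{re:bounded_dt} together with the matrix-valued, multi-output bookkeeping inherited from \citet{ntk2018jacot}. Once the kernel is fixed, the remaining span-and-interpolation argument is routine linear algebra.
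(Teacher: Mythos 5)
Your proposal is correct, and it shares the paper's scaffolding only up to the kernel-constancy step: like the paper, you invoke \cref{re:bounded_dt} together with Theorems~1 and~2 of \citet{ntk2018jacot} to freeze the kernel across all warm-started phases, and you use the closed-form limit of kernel gradient flow started from an arbitrary function. After that point the two arguments genuinely diverge. The paper proceeds by direct computation: it expands $f_{S_i \to S_{i+1}}$ phase by phase, isolates a correction term $A\,\mathcal{K}^{-1}_{S_i}\bigl(f^*_{S_i}(\mathcal{X}_{S_i}) - f_0(\mathcal{X}_{S_i})\bigr)$, and kills $A$ via an explicit block-matrix (selection-matrix) identity in \eqref{eq:app:ws_eq_cs}, then inducts on the phase index. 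You instead avoid all matrix algebra with a representer-type uniqueness argument: every converged phase interpolates its training set and only adds a function in the span of kernel sections over that set, so both $f_{S_{1,\dots,C}}$ and $f_{S_C}$ lie in the affine family $f_0 + \mathcal{K}(\cdot,\mathcal{X}_{S_C})\,\alpha$ and interpolate $S_C$, whence positive definiteness of the Gram matrix forces them to coincide. Each route buys something: the paper's computation yields the stronger intermediate statement $f_{S_i \to S_{i+1}} = f_{S_{i+1}}$ at \emph{every} stage, which is directly the form invoked at each active-learning cycle; your argument is shorter and slightly more general, since it never uses the full nesting $S_j \subset S_i$ of \cref{setting} but only $S_i \subseteq S_C$ for all $i$ together with the final phase training on $S_C$. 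Two housekeeping points you should make explicit: both proofs tacitly require the limiting NTK Gram matrix $\mathcal{K}(\mathcal{X}_{S_C},\mathcal{X}_{S_C})$ to be invertible (the paper writes $\mathcal{K}^{-1}_{S_i}$ without comment; your uniqueness step needs positive definiteness, which holds under \citet{ntk2018jacot}'s assumptions on the nonlinearity and data), and the span/coefficient bookkeeping must be done with the matrix-valued kernel $\mathcal{K}(x,x') \in \mathbb{R}^{n_L \times n_L}$, i.e.\ with stacked coefficient vectors $\gamma \in \mathbb{R}^{M_C n_L}$, as you briefly acknowledge.
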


\begin{proof}\label{sec:app:th1_proof}

As \cref{re:bounded_dt} showed, when training a neural network on multiple overlapping datasets under \cref{setting}, the training direction remains stochastically bounded. Thus, as the width of the layers of the network tend to infinite sequentially, we can use \citet{ntk2018jacot}'s Theorem 2 to show that the NTK $\mathcal{K}$ also remains constant during training in this setting. Accordingly, when trained only on a dataset $S_i$, the neural network function $f$ evolves as
\begin{align} \label{eq:app:der_f_t}
\begin{split}
    \partial_t f_t (x) &= - \nabla_{\mathcal{K}} C|_{f_t}^{S_i}(x) = - \Phi_\mathcal{K} \left(  \partial^{S_i}_{f} C |_{f_t} \right) = - \frac{1}{2N} \sum_{x_j \in S_i} \mathcal{K}(x, x_j) (f_t(x_j) - f^*_{S_i}(x_j)) \\
    &= \frac{1}{2N} \underbrace{\sum_{x_j \in S_i} \mathcal{K}(x, x_j) f_t(x_j)}_\text{function of $f_t(x), x$ at $t$} - \frac{1}{2N}  \underbrace{\sum_{x_j \in S_i} \mathcal{K}(x, x_j) f^*(x_j)}_\text{function of $x$ at $t$}
\end{split}
\end{align}
where $\mathcal{K}$ is the corresponding NTK of $f_t$, which remains constant during training. If we look closely, we witenss that this is a system of ODEs, whose solution for the finite dataset $S_i$ from initialization $f_0$ can be written as
\begin{equation} \label{eq:app:reg_si}
    f_t(\mathcal{X}_{S_i}) = f^*_{S_i}(\mathcal{X}) + e^{-t\mathcal{K}_{S_i}} \left(f_0(\mathcal{X}_{S_i}) - f^*_{S_i}(\mathcal{X}_{S_i})\right).
\end{equation}
where $\mathcal{K}_{S_i} = \mathcal{K}(\mathcal{X}_{S_i}, \mathcal{X}_{S_i})$. As $\Phi_\mathcal{K}(\cdot)$ helps us generalize the values of kernel gradient (and consecutively $f_t$, as it evolves according to kernel gradient) to values $x$ outside the dataset $S_i$ (and also $p^{in}$). Now that we have derived the closed form solution of the outputs of $f_t$ on $X_{S_i}$, we can also derive the outputs of $f_t$ on any arbitrary $x$ by taking the integral of \eqref{eq:app:der_f_t} and replacing $f_t$ according to \eqref{eq:app:reg_si}: 
\begin{align} \label{eq:app:reg_arbitrary}
    \begin{split}
        f_t (x) &= \frac{1}{2N} \sum_{x_j \in S_i} \mathcal{K}(x, x_j) \left [ \int_{t'=0}^t \left(f_{t',z}(x_j) - f^*_{S_i, z}(x_j)\right) dt' \right ]_{z} \\
        &= f_0(x) + \mathcal{K}(x, \mathcal{X}_{S_i}) \mathcal{K}^{-1}_{S_i} \left(I - e^{-t\mathcal{K}_{S_i}} \right) \left( f^*_{S_i}(\mathcal{X}_{S_i}) - f_0(\mathcal{X}_{S_i}) \right)
    \end{split}
\end{align}
where $f_{\cdot, z}$ shows the $z$th index of output of $f$ and $[ \cdot ]_z$ shows the stacked vector for different values of $z$, such that $z$ is an integer in $[1-n_L]$.
Starting from initialization, we can use this to characterize the outputs of the trained neural network on $S_i$ using gradient flow for time $t$. We're interested in starting from $f_{S_i}$ and training on $S_{i+1}$ for infinite time, according to the same loss function defined on $p^{in}$. For the sake of having more clear notations, we denote this function as $f_{S_i \to S_{i+1}}$ (defiend as $f_{S_i,S_{i+1}}$ in the statement that we are proving).
\begin{align}
    \begin{split}
        f_{S_i \to S_{i+1}} (x) &= f_{S_{i}}(x) + \mathcal{K}(x, \mathcal{X}_{S_{i+1}}) \mathcal{K}^{-1}_{S_{i+1}} \left( f^*_{S_{i+1}}(\mathcal{X}_{S_{i}}) - f_{S_{i}}(\mathcal{X}_{S_{i+1}}) \right) \\
        &= \mathcal{K}(x, \mathcal{X}_{S_{i+1}})\mathcal{K}^{-1}_{S_{i+1}} f^*_{S_{i+1}}(\mathcal{X}_{S_{i+1}}) \\
        &\quad+ \underbrace{\left( \mathcal{K}(x, \mathcal{X}_{S_1}) - \mathcal{K}(x, \mathcal{X}_{S_{i+1}}) \mathcal{K}^{-1}_{S_{i+1}} \mathcal{K}(\mathcal{X}_{S_{i+1}}, \mathcal{X}_{S_{i}}) \right)}_A \mathcal{K}^{-1}_{S_{i}} \left (f^*_{S_{i}}(\mathcal{X}_{S_{i}}) - f_0(\mathcal{X}_{S_{i}}) \right) \\ 
        &\quad+ f_0(x) - \mathcal{K}(x, \mathcal{X}_{S_{i+1}})\mathcal{K}^{-1}_{S_{i+1}} f_0(\mathcal{X}_{S_{i+1}}) \\
        &= f_{S_{i+1}}(x) + A \mathcal{K}^{-1}_{S_{i}} \left (f^*_{S_{i}}(\mathcal{X}_{S_{i}}) - f_0(\mathcal{X}_{S_{i}}) \right)
    \end{split}
\end{align}
If we look closely, $A$ can be written as:
\begin{align} \label{eq:app:ws_eq_cs}
    \begin{split}
        A &= \mathcal{K}(x, \mathcal{X}_{S_{i+1}}) 
        \begin{bNiceMatrix}
            I_{M_i} \\
            O_{(M_{i+1} - M_i) \times M_i}
        \end{bNiceMatrix}
        - \mathcal{K}(x, \mathcal{X}_{S_{i+1}}) \mathcal{K}^{-1}_{S_{i+1}} \mathcal{K}(\mathcal{X}_{S_{i+1}}, \mathcal{X}_{S_{i}}) \\
        &= \mathcal{K}(x, \mathcal{X}_{S_{i+}}) \left ( 
            \begin{bNiceMatrix}
                I_{M_i} \\
                O_{(M_{i+1} - M_i) \times M_i} 
            \end{bNiceMatrix} - \mathcal{K}^{-1}_{S_{i+1}} \mathcal{K}(\mathcal{X}_{S_{i+1}}, \mathcal{X}_{S_i})
        \right) \\
        &= \mathcal{K}(x, \mathcal{X}_{S_{i+1}}) \left ( 
            \begin{bNiceMatrix}
                I_{M_i} \\
                O_{(M_{i+1} - M_i) \times M_i} 
            \end{bNiceMatrix} - \mathcal{K}^{-1}_{S_{i+1}} \mathcal{K}(\mathcal{X}_{S_{i+1}}, \mathcal{X}_{S_{i+1}})
            \begin{bNiceMatrix}
                I_{M_i} \\
                O_{(M_{i+1} - M_i) \times M_i} 
            \end{bNiceMatrix} \right ) \\
        &= \mathcal{K}(x, \mathcal{X}_{S_{i+1}}) \times 0 = 0
    \end{split}
\end{align}
where $M_i$ denotes the number of datapoints in $S_i$.
Note that \eqref{eq:app:ws_eq_cs} doesn't depend on the indices of $S_i$ and $S_{i+1}$. Rather, it just relies on $S_i \subseteq S_{i+1}$, which is the case in \cref{setting} for any consecutive batches. Thus, for any arbitrary $x$, $f_{S_i \to S_{i+1}}(x) = f_{S_{i+1}}(x)$. Using basic forward induction, starting from $f_{S_i}$, we can prove that $f_{S_1 \to S_2 \to \dots \to S_C}(x) = f_{S_C}(x)$ for any arbitrary $x$. The proof is complete.
\end{proof}

\section{Detailed Time Complexity Analysis}
\label{app:sec:time_complex}

We now provide the full time complexity of the NTK approximation proposed in \cref{eq:lin_approx} (using efficient block computation in \cref{eq:lin_block}),
as briefly discussed in \cref{sec:our_approach}.

Following the notations introduced in \cref{sec:our_approach}, let $L$ be the size of the labeled set,
$U$ of the unlabeled set,
$P$ the number of model parameters,
and $E$ the number of training epochs. 

First, the computation of $\Theta_{\mathcal{L}}(\mathcal{X}_\mathcal{U},\, \mathcal{X}_\mathcal{L})$, $\Theta_{\mathcal{L}}(\mathcal{X}_\mathcal{L},\, \mathcal{X}_\mathcal{L})$, and ${\Theta_{\mathcal{L}}(\mathcal{X}_\mathcal{L},\, \mathcal{X}_\mathcal{L})}^{-1}$ take $\mathcal O(LUP)$, $\mathcal O(L^2P)$ and $\mathcal O(L^3)$ time, respectively.
If we compute \eqref{eq:lin_block} for all $x \in \mathcal{U}$, the matrix multiplication takes $\mathcal O(UL^2)$.
Putting altogether, the time complexity of the NTK approximation is $\mathcal O(LUP + L^2P + L^3 + UL^2)$.
Since we assume $P > \max(L, U)$ for overparameterized models, the time complexity can be simplified to $\mathcal O(LUP + L^2P)$.
Therefore, depending on the size of the labeled and unlabeled sets, either $\mathcal O(LUP)$ or $\mathcal O(L^2P)$ can be the dominant term, as we specify in \cref{sec:our_approach}. 

Note that practioners can further reduce the time complexity of performing kernel regression using NTK to $\mathcal{O}(\max(PL\sqrt{L}, PU\sqrt{U}))$ depending on whether $L > U$ or $L < U$ using the conjugate gradient techniques discussed in \citet{rudi2017falkon}, increasing the scalability of the proposed method to be applicable on large datasets such as ImageNet \citep{imagenet2009deng}.

\begin{figure*}[t!]
\captionsetup[subfigure]{justification=centering}
    \centering
    \begin{subfigure}[b]{0.45\textwidth}
        \includegraphics[width=\textwidth]{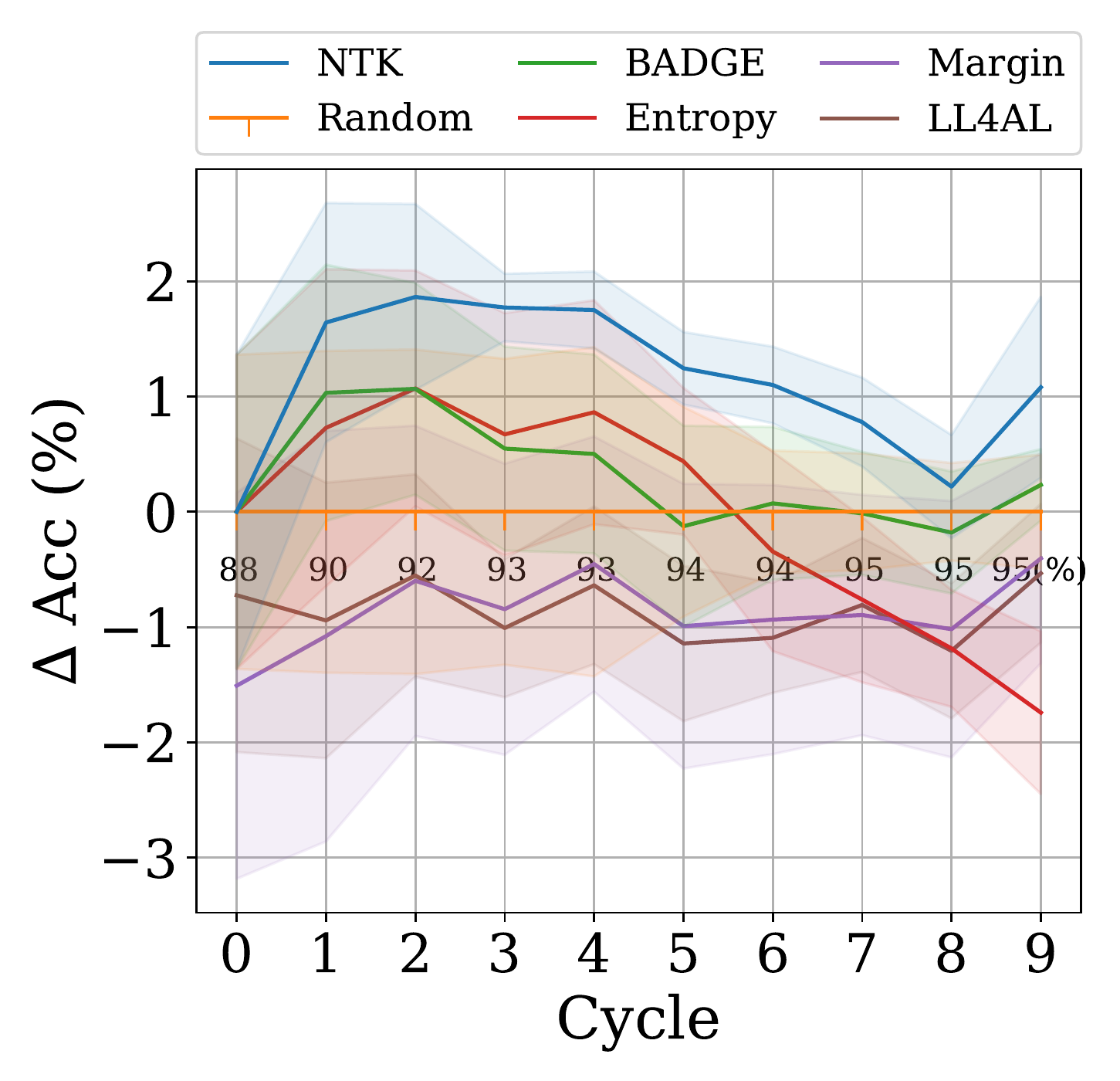}
        \caption{MNIST: 1-layer WideResNet}
        \label{app:fig:mnist_wideresnet}
    \end{subfigure}
    \hfill
    \begin{subfigure}[b]{0.45\textwidth}
        \includegraphics[width=\textwidth]{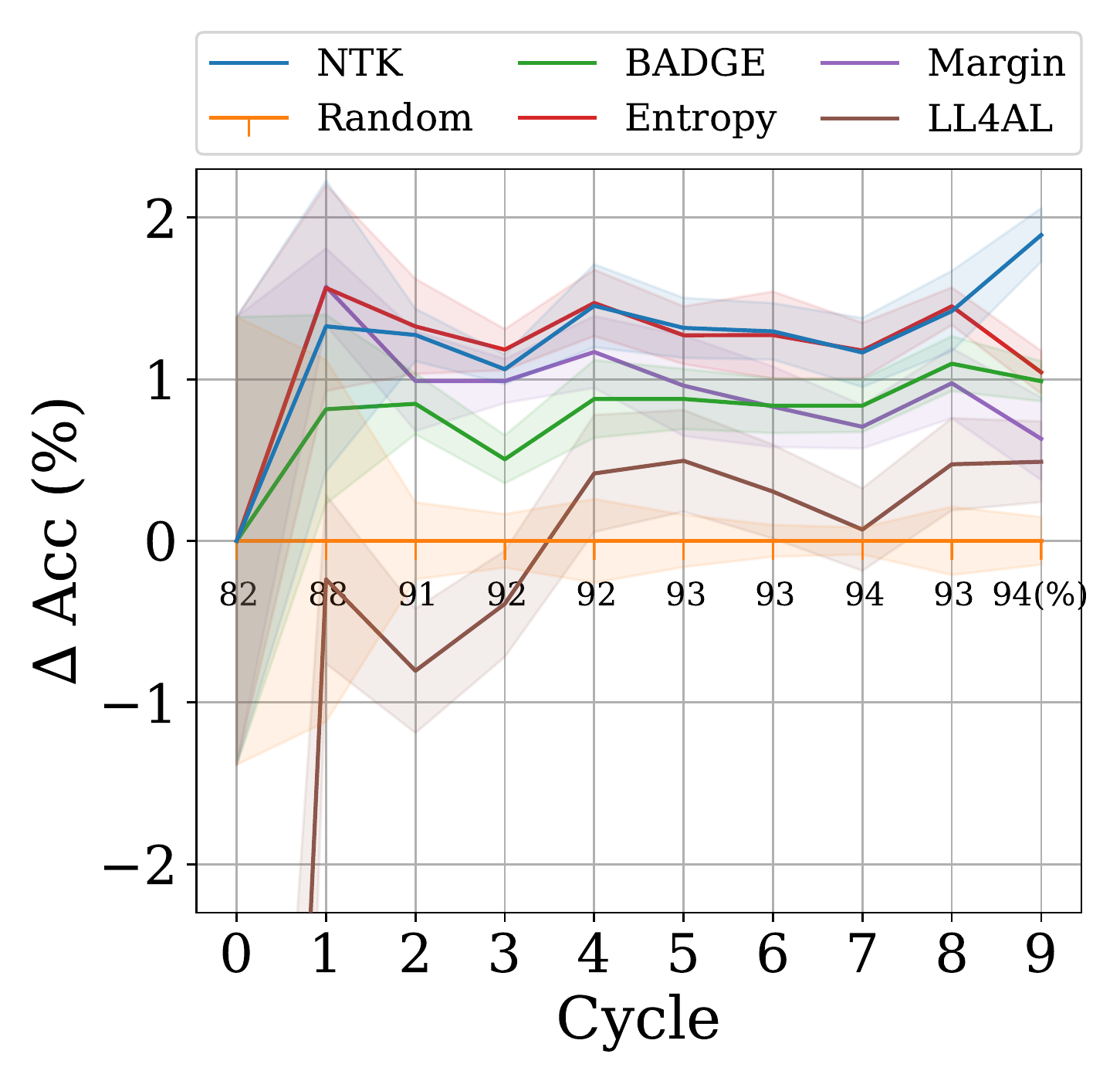}
        \caption{SVHN: 2-layer WideResNet}
        \label{app:fig:svhn10_wideresnet2}
    \end{subfigure}
    \hfill
    \begin{subfigure}[b]{0.45\textwidth}
        \includegraphics[width=\textwidth]{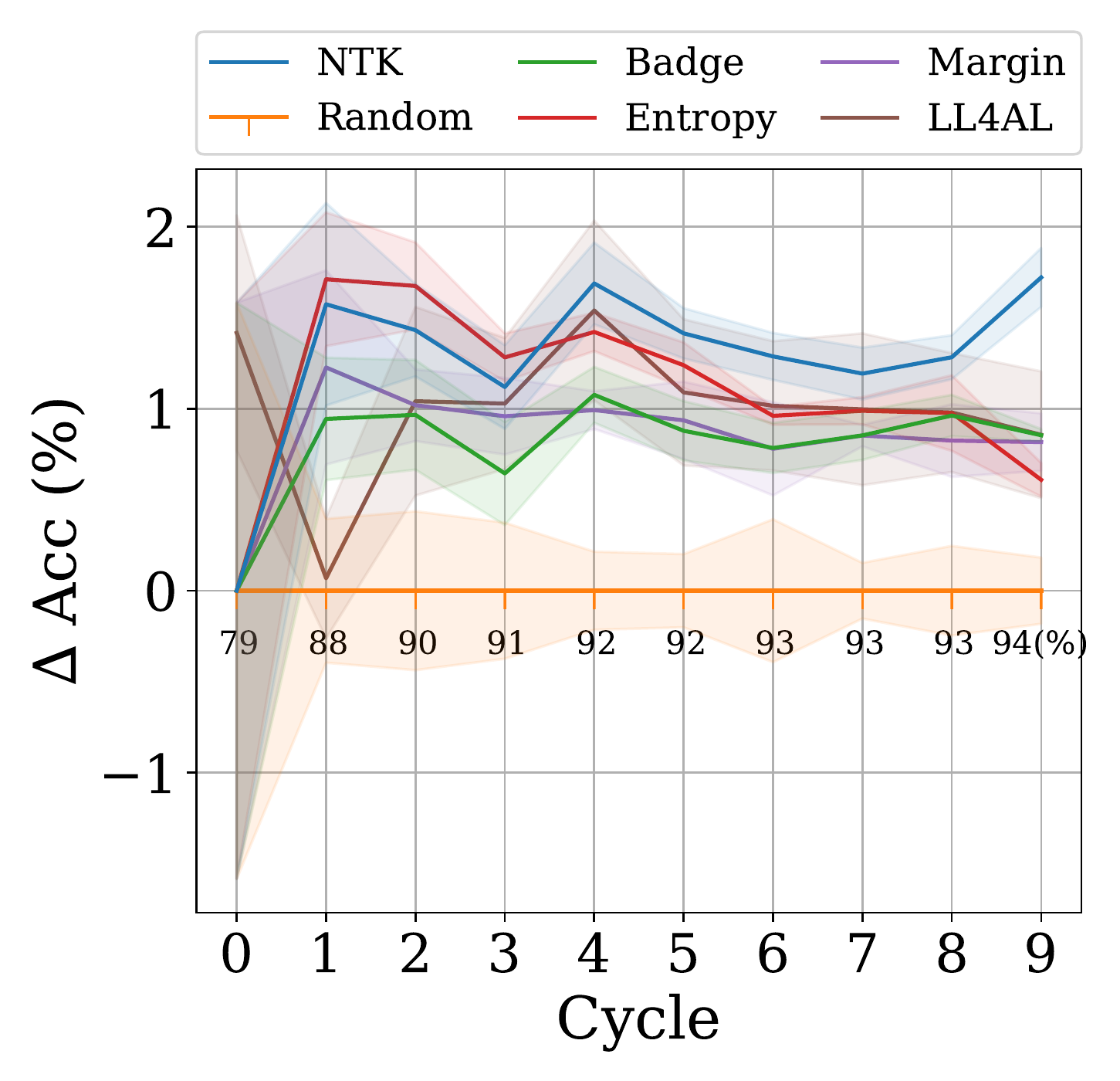}
        \caption{SVHN: ResNet18}
        \label{app:fig:svhn10_resnet18}
    \end{subfigure}
    \hfill
    \begin{subfigure}[b]{0.45\textwidth}
        \includegraphics[width=\textwidth]{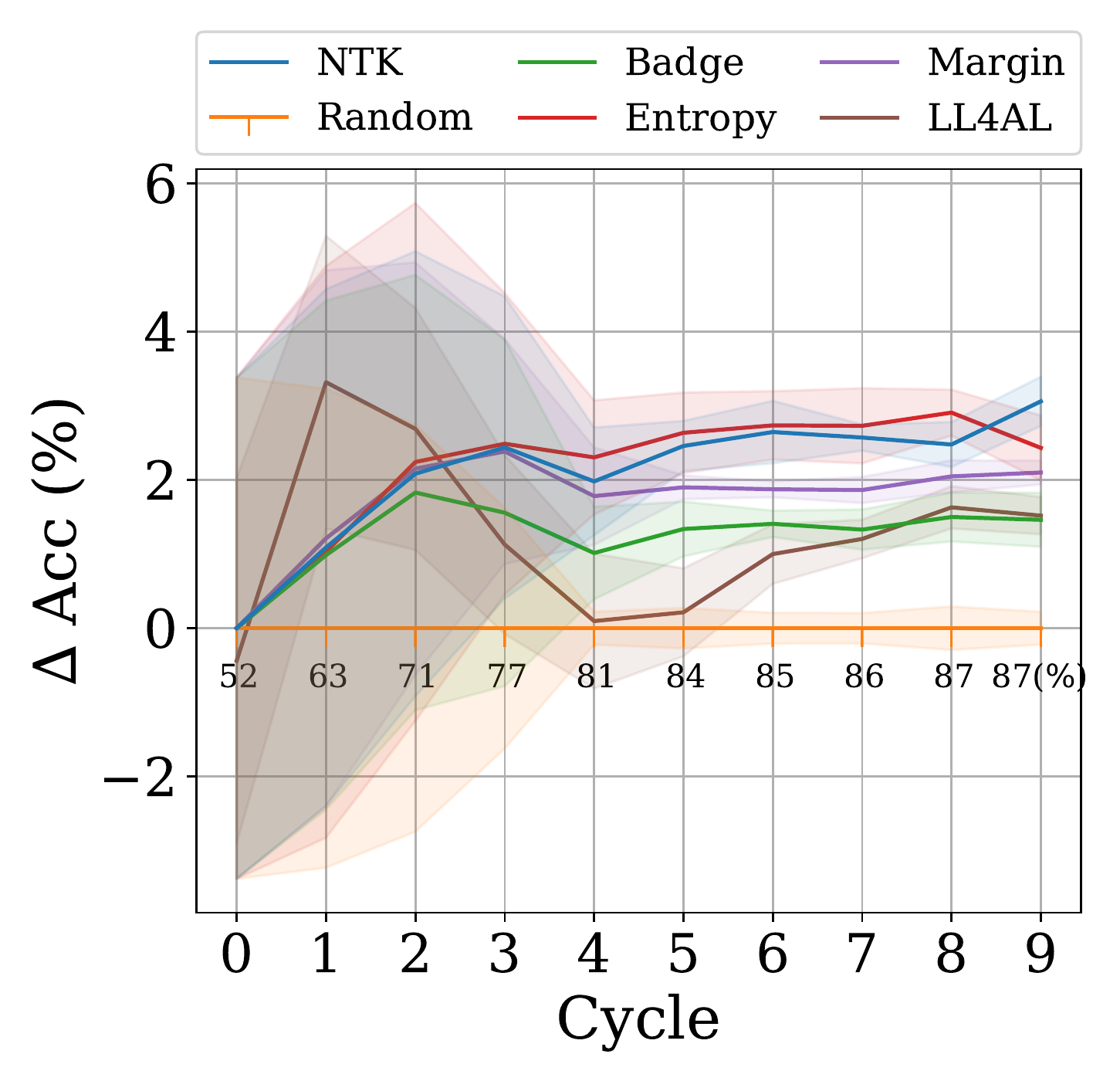}
        \caption{CIFAR10: ResNet18}
        \label{app:fig:cifar10_resnet18}
    \end{subfigure}
    \hfill
    \caption{Comparison of the-state-of-the-art active learning methods on various benchmark datasets. Vertical axis shows difference from random acquisition, whose accuracy is shown in text.}
    \label{app:fig:sota}
    \vspace{-3mm}
\end{figure*}

\section{Description of Datasets}
\label{app:sec:datasets}

MNIST consists of 10 hand-written digits with $60,000$ training and $10,000$ test images, at size $28 \times 28$.
SVHN also consists of 10 digit numbers with $73,257$ training and $26,032$ test images, at size $32 \times 32$;
it is more challenging than MNIST, containing images from pictures of house numbers, with much more variation and many distractions present.
CIFAR10 contains $50,000$ training and $10,000$ test images, also $32 \times 32$ and equally split between 10 classes like \texttt{airplane}, \texttt{frog}, and \texttt{truck}.
CIFAR100 is the same as the CIFAR10, except it has 100 classes; the resolution is still $32 \times 32$, and images are equally split between 100 classes.
The 100 classes are grouped with 20 super-classes,
but we use only the 100 sub-classes in this work.

\section{Additional Comparison with State-of-the-art Methods}
\label{app:sec:additional_sota}
In \cref{fig:sota}, we demonstrate that our proposed method is equal or better compared to other state-of-the-art methods on various benchmark datasets in active learning, with different architectures.
In this section, we provide additional experiment results as shown in \cref{fig:sota}.
Among all the combinations of the datasets we use (MNIST, SVHN, CIFAR10 and CIFAR100), and architectures (1-layer WideResNet, 2-layer WideResNet, and ResNet18), we do not provide results on MNIST with 2-layer WideResNet and ResNet18 since 1-layer WideResNet is large enough for MNIST.
Also, we do not provide the results with  1-layer WideResNet on CIFAR10 since 1-layer WideResNet is too shallow for complicate data like CIFAR10 and 100.
Similarly, according to our experiments, WideResNet with 1 or 2-layers are too shallow for CIFAR 100, unlike ResNet18.

As a result, in \cref{app:fig:sota}, we provide the experiment results of all the other combinations.
We visualize in the same way as we do for \cref{fig:sota} where we plot the difference between each method and random acquisition function at each cycle.
\cref{app:fig:sota} shows that the proposed NTK approximation is equal or better than the other state-of-the-art methods; especially, NTK is better than any other methods for \cref{app:fig:mnist_wideresnet} and \cref{app:fig:svhn10_resnet18}, and comparable to entropy acquisition function for \cref{app:fig:svhn10_wideresnet2} and \cref{app:fig:cifar10_resnet18}.

\section{Instability of LL4AL}
\label{app:sec:ll4al}

In \cref{fig:sota}, we do not include LL4AL~\citep{ll4al2019yoo} performance on CIFAR10 or CIFAR100; its performance is too poor to show on the same plots.
To validate this was not due to a simple poor hyperparameter choice, we show the performance of LL4AL on CIFAR100 with varying learning rates in \cref{app:fig:ll4al_cifar100}.
We run each experiment $3$ times with the maximum learning rates of $\{0.3, 0.1, 0.03, 0.01, 0.003, 0.001\}$ for the 1cycle learning rate policy~\citep{onecycle2019smith}, which we use for all the other methods.
When the maximum learning rate is $0.3$, the gradient blows up and the model does not learn anything. 

\cref{app:fig:ll4al_cifar100} shows none of the learning rates are comparable with random acquisitions (trained with the maximum learning rate of $1.0$), although it has been reported that LL4AL performs better than the random acquisition function.
We conjecture this is because, if we train with random acquisitions using a learning rate that ``works'' for LL4AL, LL4AL performs better.
On the other hand, when the learning rate is tuned to maximize the performance of each method, random acquisition can perform better.

\begin{figure*}[t!]
\captionsetup[subfigure]{justification=centering}
    \centering
    \begin{subfigure}[b]{0.48\textwidth}
        \includegraphics[width=\textwidth]{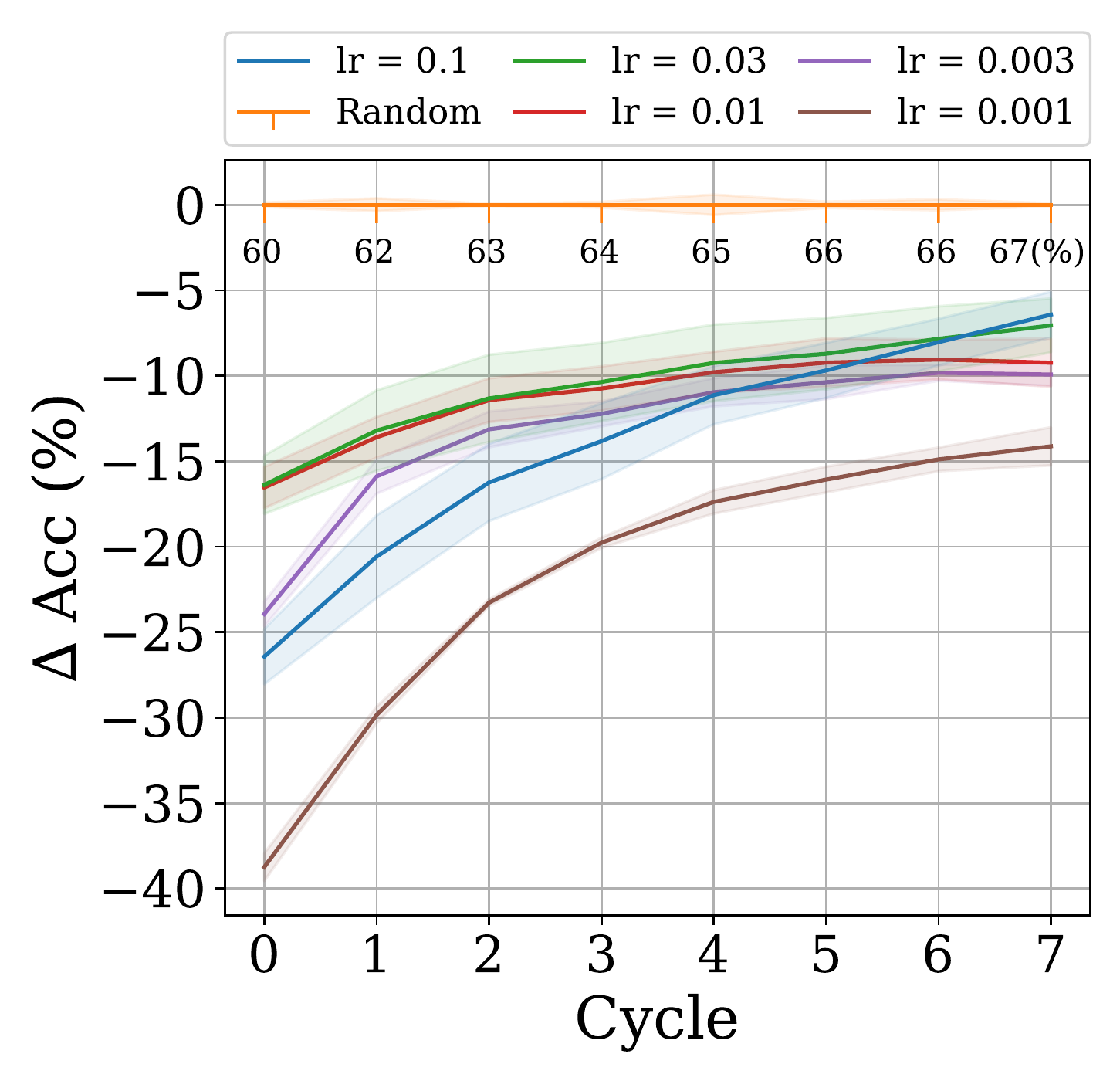}
        \caption{Varying LL4AL's learning rate (CIFAR100).}
        \label{app:fig:ll4al_cifar100}
    \end{subfigure}
    \hfill
    \begin{subfigure}[b]{0.48\textwidth}
        \includegraphics[width=\textwidth]{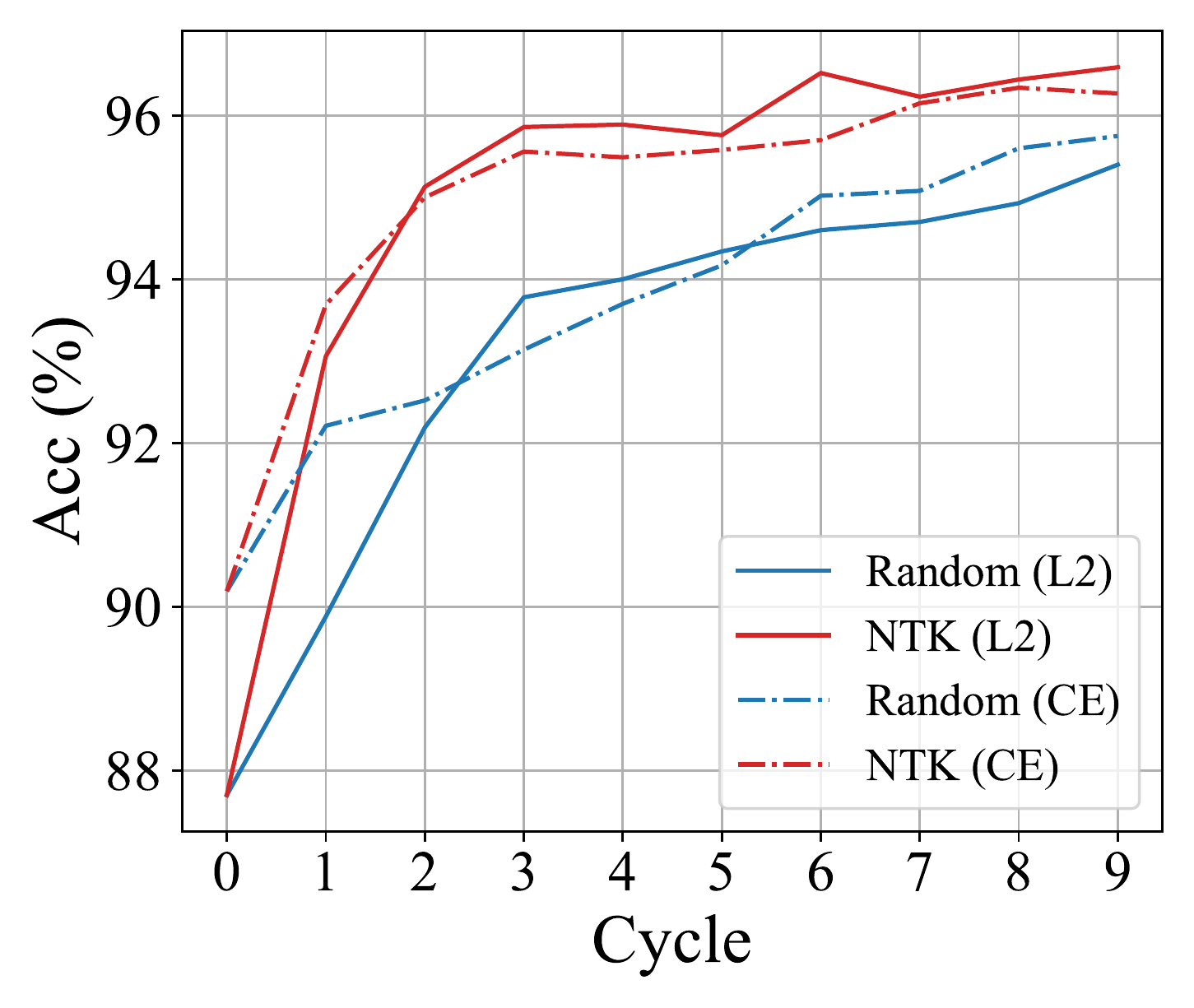}
        \caption{$L_2$ and cross-entropy losses on MNIST.}
        \label{app:fig:l2_ce}
    \end{subfigure}
    \caption{Additional comparisons.}
\end{figure*}

\section{Comparison of $L_2$ and Cross-entropy Loss}
\label{app:sec:l2-ce}
We use $L_2$ loss instead of cross-entropy loss to train a classifier.
As mentioned in the main paper, \citet{l2_vs_ce2020hui} empirically show $L_2$ loss is just as effective as cross-entropy loss for various classification tasks in computer vision and natural language processing.
Because of this, previous works that exploit a linearized network using empirical NTKs also use $L_2$ loss as a replacement for cross-entropy~\citep{linntk2019lee,meta_ntk2021zhou}.

To further demonstrate that $L_2$ loss is indeed as effective as cross-entropy in active learning, we provide experimental results with $L_2$ and cross-entropy loss (CE) for both random and the proposed NTK acquisition functions in \cref{app:fig:l2_ce}.
Although cross-entropy starts with a higher accuracy, $L_2$ quickly catches up. 
Overall, the difference between $L_2$ and cross-entropy is not significant for either random or NTK acquisition functions.

\end{document}